\documentclass[11pt]{article}
\usepackage[utf8]{inputenc}
\usepackage{amsmath, amssymb, amsthm}
\usepackage{geometry}
\usepackage{hyperref}
\usepackage{pgfplots}
\usepackage{authblk}
\usepackage{xcolor}
\usepackage{mleftright}\mleftright
\definecolor{niceRed}{RGB}{190,38,38}
\definecolor{Red2}{RGB}{219, 50, 54}
\definecolor{mgreen}{RGB}{160, 200, 140}
\definecolor{blueGrotto}{RGB}{5,157,192}
\definecolor{limeGreen}{HTML}{81B622}
\definecolor{myellow}{rgb}{0.88,0.61,0.14}
\definecolor{darkGreen}{HTML}{2E8B57}
\definecolor{navyBlueP}{HTML}{03468F}
\definecolor{Sepia}{HTML}{7F462C}
\definecolor{red2}{HTML}{1F462C}
\definecolor{orange2}{HTML}{FF8000}
\definecolor{mgray}{HTML}{ABB3B8}
\definecolor{lgray}{HTML}{E5E8E9}
\definecolor{myPurple}{RGB}{175,0,124}
\definecolor{mypurple2}{rgb}{0.8,0.62,1}
\definecolor{royalBlue}{HTML}{057DCD}
\definecolor{mpink}{HTML}{FC6C85}
\definecolor{lblue}{RGB}{74,144,226}
\definecolor{peagreen}{RGB}{152,193,39}
\definecolor{typ_navy}{HTML}{001f3f}
\definecolor{typ_blue}{HTML}{0074d9}
\definecolor{typ_aqua}{HTML}{7fdbff}
\definecolor{typ_teal}{HTML}{39cccc}
\definecolor{typ_eastern}{HTML}{239dad}
\definecolor{typ_purple}{HTML}{b10dc9}
\definecolor{typ_fuchsia}{HTML}{f012be}
\definecolor{typ_maroon}{HTML}{85144b}
\definecolor{typ_red}{HTML}{ff4136}
\definecolor{typ_orange}{HTML}{ff851b}
\definecolor{typ_yellow}{HTML}{ffdc00}
\definecolor{typ_olive}{HTML}{3d9970}
\definecolor{typ_green}{HTML}{2ecc40}
\definecolor{typ_lime}{HTML}{01ff70}
\definecolor{newgreen}{HTML}{83c702}
\definecolor{newpurp}{RGB}{97,96,121}
\definecolor[named]{Purple}{cmyk}{0.55,1,0,0.15}
\definecolor[named]{DarkBlue}{cmyk}{1,0.58,0,0.21}
\hypersetup{colorlinks,
    colorlinks = true,
    linkcolor=DarkBlue,
    citecolor=Sepia,
    urlcolor=Purple,
    filecolor=DarkBlue
    linktocpage = true,
}
\usepackage{enumitem}

\pgfplotsset{compat=1.18}

\usepackage{tikz}
\usepackage[capitalize,noabbrev]{cleveref}
\usepackage[style=alphabetic,natbib=true,maxbibnames=10]{biblatex}

\usetikzlibrary{arrows.meta,fit,positioning}
\pgfdeclarelayer{background}
\pgfsetlayers{background,main}

\usepackage{mathtools}
\newtheorem{theorem}{Theorem}
\newtheorem*{theorem*}{Theorem}
\newtheorem{lemma}{Lemma}
\newtheorem{proposition}{Proposition}

\newtheorem{definition}{Definition}

\newcommand{\R}{\mathbb{R}}
\newcommand{\E}{\mathbb{E}}
\newcommand{\norm}[1]{\left\| #1 \right\|}

\newcommand{\ip}[2]{\langle #1, #2 \rangle}

\newcommand{\cX}{\mathcal{X}}
\newcommand{\cD}{\mathcal{D}}

\newcommand{\cZ}{\mathcal{Z}}
\newcommand{\cO}{\mathcal{O}}

\newcommand{\cY}{\mathcal{Y}}

\newcommand{\eps}{\varepsilon}
\newcommand{\RRM}{\mathrm{RRM}}

\newcommand{\thetaPS}{\theta_{\mathrm{PS}}}
\newcommand{\thetaPO}{\theta_{\mathrm{PO}}}
\newcommand{\muPS}{\mu_{\mathrm{PS}}}

\newcommand{\poly}{\mathrm{poly}}

\DeclareMathOperator*{\argmin}{arg\,min}

\newcommand{\EVIGap}{\operatorname{Gap}_{\rm EVI}}

\newcommand{\cV}{\mathcal{V}}

\newcommand{\DPR}[2]{
\E_{z\sim\cD(#1)}\ell(z;#2)
}
\newcommand{\tDPR}{\mathrm{DPR}}

\usepackage[linesnumbered,ruled,lined,noend]{algorithm2e}
\input{algo_config}

\title{Stabilizing Performative Feedback Loops \\ with Minimal Model Deployments}

\date{}

\author[1]{Gabriele Farina}
\author[2]{Juan Carlos Perdomo}
\affil[1]{MIT EECS}
\affil[2]{New York University}

\begin{document}

\maketitle

\thispagestyle{empty}
\begin{abstract}
When algorithmic predictions inform people's decisions, the models we deploy are performative and  actively shape the data we see. This feedback loop between algorithms and their broader environments introduces a challenge in the mechanics of social prediction: If different predictive models induce different distributions, is it possible to efficiently learn a prediction rule that is optimal for the distribution that it induces? Formally, this solution concept is known as \emph{performative stability}. A core challenge in learning a performatively stable predictor is that, unlike supervised learning where distributions are fixed, the learner must deploy different predictors and observe their induced distributions.


The main contribution of our work is a new algorithmic procedure that, in the high-accuracy regime, finds a performatively stable model in nearly the minimum number of model deployments without making any assumptions regarding how predictions shape distributions. In particular, our procedure succeeds at finding a randomized performatively stable predictor using exponentially fewer model deployments than prior approaches. Our second main contribution is a structural result showing how this recent randomized notion of stability achieved by our algorithm can be derandomized into a single predictor satisfying the prior deterministic notion if one is willing to assume that the loss is well-conditioned and that performative effects are weak, as in early work in this area. On a technical level, our results come from building on an underexplored technical connection between performative stability and expected variational inequalities.
\end{abstract}

\clearpage
\thispagestyle{empty}
\tableofcontents

\clearpage
\setcounter{page}{1}
\section{Introduction}

The impacts of algorithmic forecasts on society are manifest and ubiquitous. 
Medical risk predictors influence physician decisions and patient behavior in ways that actively shape future health outcomes at a national scale. 
Forecasts of individual preferences on social media platforms shape people's true preferences and opinions.
And financial predictions determine market expectations influencing the true supply and demand patterns for underlying goods. 

In short, when we use predictions to inform decisions, these predictions become \textit{performative}. The models we deploy actively shape the data we see. While this idea has deep roots within economics and sociology dating back to the work of \citet{morgenstern1928wirtschaftsprognose} and \citet{merton1948self}, it was only recently formalized within computer science by Perdomo, Zrnic, Mendler-D\"{u}nner, and Hardt \cite{perdomo2020performative}, henceforth referred to as PZMH.

Performativity introduces a key challenge to the mechanics of social prediction. How could one possibly predict future outcomes if outcomes are moving targets that evolve with predictions themselves?
Theoretical investigations into this question started with the work of Herbert \citet{simon1954bandwagon,grunberg1954predictability}. 
More recently, there is renewed
interest in the problem due to the widespread presence of these feedback loops in  domains like algorithmic credit scoring, medicine, as well as language modeling, see e.g. \citet{hardt2026retraining}.

PZMH introduced performative stability as a solution to these algorithmic feedback loops. A model is performatively stable if it is risk-minimizing with respect to the joint distribution over features and outcomes that it induces. Stable models satisfy the property that if we deploy them and then fit a new model to their induced distribution, we get the same model back. Said otherwise, performatively stable models are fixed points of repeated retraining. Consequently, they prevent a common runaway feedback loop where developers respond to the distribution shift caused by model deployment by repeatedly refitting to the new distribution and redeploying.

Given their appeal, there has been significant interest in understanding when and why stable models can be learned. Initial papers in the area showed that if the underlying loss function is smooth, strongly convex, and predictions have only a very weak and smoothly varying influence on the distribution, classical stochastic optimization algorithms converge to stability \cite{mendler2020stochastic,drusvyatskiy2023stochastic}. Unfortunately, these restrictions on model influence are necessary in theory, yet patently false in practice \cite{dews,mendler2024engine}. If no restrictions are made regarding how predictions influence outcomes, finding a performatively stable model can be PPAD-hard \cite{anagnostides2026computational}, or provably impossible (since stable models may fail to exist).

\citet{farina2026stability} recently showed a way around this conundrum. If we allow the learner to output a \textit{randomized} predictor that mixes between different models rather than forcing them to deterministically make predictions with a single model, then stability is always efficiently achievable. In more detail, they show that the uniform mixture over the iterates produced by any no-regret online algorithm is performatively stable regardless of how models shape distributions. While this insight substantially expands the scope of problems for which one can guarantee stability, it comes with nontrivial deployment costs. To achieve an $\eps$-stable mixture via this no-regret approach, one must deploy, and observe how people respond to, a polynomial in $1/\eps$ many models.

Relative to supervised learning where the complexity of learning is governed by the number of samples needed to generalize and the computational cost of searching for a hypothesis, in performative contexts we also need to consider the number of deployments. In domains where data is abundant and computation is cheap, the organizational costs of placing a new model into production are often the main bottleneck. For instance, a large social media platform might collect hundreds of thousands of data points per day, yet they might only be able to deploy a new model a few dozen times per year. 
Motivated by this reality, we initiate the study of the following problem:
\begin{center}
    \emph{How many model deployments are necessary and sufficient \\ to stabilize feedback loops in performative prediction?}
\end{center}
This question of model deployments is also intimately tied to the question of how much \textit{randomness} is necessary to stabilize performative feedback loops. Motivated by the hardness of finding a deterministic stable model, \citet{farina2026stability} and \citet{perdomo2025revisiting} showed polynomially large mixtures suffice. However, it is unknown whether that much randomness is truly necessary.

Our work has two main results. First, we show that one can always find a performatively stable mixture that randomizes across $d\log(1/\eps)$ many models in $d\log(1/\eps)$ many deployments without making any restrictions regarding how predictions influence data. This is an exponential improvement over prior work. In the high-accuracy regime where $\eps\leq d^{-1}$, we prove a lower bound showing its deployment complexity is unimprovable up to a log factor in $d$. Our techniques behind this result come from establishing a new
connection between performative stability and recent advances in computational perspectives on the minimax theorem \cite{sigecom}. 

Our second main result is a structural insight relating the original deterministic definition of performative stability from PZMH and the randomized relaxation from \citet{farina2026stability}. Performatively stable mixtures are always achievable, yet require the learner to randomize across multiple models. We show that if the restrictive conditions studied in the early work are true (that is, that the objective is well-conditioned and similar predictions induce similar distributions), then the mixture can be derandomized into a single predictor that is itself stable. 

This result shows that focusing on this recent randomized notion of stability is, in a sense, without loss of generality, since it implies the more stringent deterministic notion in the regime where the deterministic notion was known to be achievable. 
Beyond establishing how these two definitions are more closely related than previously thought, we illustrate how this structural result enables a new algorithmic pathway to find deterministic performatively stable models.

\subsection{Our Contributions}

We describe our results using the performative prediction framework introduced in PZMH.

Let $\Theta \subset \R^d$ be a convex, compact set of parameters $\theta$ for predictive models $f_\theta$. We will refer to models $f_\theta$ and parameters $\theta$ interchangeably. Relative to supervised learning where examples $z$ are always drawn i.i.d. from a fixed distribution $\cD$, in performative prediction if the learner deploys a model $f_\theta$, they see samples $z$ drawn from the model's induced distribution $\cD(\theta)$. Here, $\cD(\cdot)$ is the distribution map. It is an unknown function from models $\Theta$ to the space of distributions over the data $\Delta(\cZ)$. The learner can simply deploy $\theta$ and collect samples $z \sim \cD(\theta)$.

If the data space $\cZ$ is equal to the product set of features and outcomes $\cZ = \cX \times \cY$, the distribution map $\cD(\cdot)$ formally captures how the choice of predictor $\theta$ influences both the marginal distribution over features as well as the conditional distribution of outcomes. Performativity therefore formally subsumes the setting known as strategic classification \cite{hardt2016strategic}.

Given a loss function $\ell(z;\theta)$, a model is performatively stable if it approximately minimizes the expected risk over the data that it induces. More formally, 
\begin{definition}[Performative Stability]
A model $\thetaPS \in \Theta$ is $\eps$-performatively stable if 
\begin{align*}
    \DPR{\thetaPS}{\thetaPS} - \min_{\theta \in \Theta}\DPR{\thetaPS}{\theta} \leq \eps 
\end{align*}
On the other hand, a distribution $\muPS$ over models in $\Theta$ is an $\eps$-performatively stable mixture if 
\begin{align*}
    \E_{\theta \sim \muPS}\left[\DPR{\theta}{\theta}\right] - \min_{\theta' \in \Theta} \E_{\theta \sim \muPS} \left[\DPR{\theta}{\theta'} \right] \leq \eps 
\end{align*}
\end{definition}
Recalling our earlier discussion, stable models are the fixed points of retraining. By definition,
\begin{align*}
    \thetaPS \in \argmin_{\theta \in \Theta} \E_{z \sim \cD(\thetaPS)} \ell(z;\theta).
\end{align*}
Performatively stable mixtures generalize the same desiderata from deterministic predictors to settings where institutions randomize across different models. Assume that the learner makes predictions by first drawing a model $\theta \sim \muPS$  from a mixture $\muPS$ and then sees data $z \sim \cD(\theta)$ drawn from the distribution induced by that \textit{specific} model $\theta$. Stable mixtures satisfy the property that when we inspect their performance over the pushforward distribution induced via this sampling process, no model could have achieved lower loss:
\begin{align*}
    \E_{\theta \sim \muPS}\left[\DPR{\theta}{\theta}\right] \leq \min_{\theta' \in \Theta} \E_{\theta \sim \muPS} \left[\DPR{\theta}{\theta'} \right]. 
\end{align*}

By establishing a reduction to online learning,\citet{farina2026stability} illustrated how one can always efficiently compute an $\eps$-performatively stable mixture with only $\poly(1/\eps)$ samples and model deployments. 
On a technical level, this result presented a significant departure from prior work in performative prediction in that it placed no restrictions on the distribution map $\cD(\cdot)$. In particular, almost all prior research in the area imposed a Lipschitzness condition on $\cD(\cdot)$, requiring that predictions influence outcomes only in a weak and smoothly varying fashion. These restrictions apply only to a very limited class of problems. Concretely, $\cD(\cdot)$ is discontinuous whenever people choose actions (e.g. decide to administer a drug or intervention) by thresholding the outputs of the predictor as is often the case in education or medicine. See \Cref{sec:technical_overview}.

In this work, we introduce a new algorithmic toolkit for finding performatively stable points that similarly avoids any continuity assumptions on $\cD(\cdot)$ while requiring exponentially fewer model deployments. In particular, we leverage a  technical connection between performative prediction and expected variational inequalities to prove the following algorithmic result. 

\begin{theorem*}[Informal]
Assume that the loss function $\ell(z;\theta)$ is convex where $\theta\in \R^d$ and let $\cD(\cdot)$ be any, possibly discontinuous distribution map. 

Then, a polynomial-time cutting-plane algorithm that repeatedly bisects the solution space finds an $\eps$-performatively stable mixture $\muPS$ supported on $\widetilde O(d\log(1/\eps))$ models while deploying only $\widetilde O(d\log(1/\eps))$ predictors and collecting $\widetilde O(d/\eps^2)$ samples. If $\ell(z;\theta)$ is strongly convex in $\theta$, the procedure succeeds using only $\widetilde O(d/\eps)$ samples. 
\end{theorem*}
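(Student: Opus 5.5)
The plan is to recast stable mixtures as solutions of an \emph{expected variational inequality} (EVI) and then solve that EVI with an ellipsoid / central-cut method. Define the operator
\[
F(\theta) := \E_{z\sim\cD(\theta)} \nabla_\theta \ell(z;\theta),
\]
which we can estimate by deploying $\theta$ and sampling. By convexity of $\ell(z;\cdot)$, for every $\theta,\theta'$ we have $\DPR{\theta}{\theta}-\DPR{\theta}{\theta'} \le \ip{F(\theta)}{\theta-\theta'}$. Averaging over $\theta\sim\mu$ gives
\[
\E_{\theta\sim\mu}\!\left[\DPR{\theta}{\theta}\right] - \min_{\theta'}\E_{\theta\sim\mu}\!\left[\DPR{\theta}{\theta'}\right] \;\le\; \max_{\theta'\in\Theta}\E_{\theta\sim\mu}\ip{F(\theta)}{\theta-\theta'} =: \EVIGap(\mu).
\]
It therefore suffices to find a mixture $\mu$ with $\EVIGap(\mu)\le\eps$. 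Crucially, this reduction uses nothing about $\cD(\cdot)$: $F$ may be arbitrarily discontinuous.

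\textbf{Cutting-plane search.} The key observation, as in the ellipsoid-against-hope approach, is that for each fixed $\theta'$ the function $\mu\mapsto\E_\mu\ip{F(\theta)}{\theta-\theta'}$ is linear in $\mu$. By the minimax theorem, an EVI solution exists. To find one, I run a central-cut method over $\theta'$ (equivalently over $\Theta$). I keep a localizer set $S_t\supseteq\Theta$ and, at each round:
\begin{itemize}
\item[(a)] take the center $\theta_t$ of $S_t$ (projected or handled by a feasibility cut if $\theta_t\notin\Theta$);
\item[(b)] deploy $\theta_t$ and estimate $F(\theta_t)$;
\item[(c)] cut with the halfspace $\{\theta':\ip{F(\theta_t)}{\theta_t-\theta'}\ge 0\}$, which keeps exactly the points $\theta'$ for which $\theta_t$ is not already "good."
\end{itemize}
After $T=O(d\log(R/(r\eps)))$ rounds (ellipsoid volume shrinkage, or $O(d\log)$ for a center-of-gravity/Vaidya variant) the localizer has volume smaller than that of an $\eps$-scaled copy of $\Theta$.

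\textbf{Extracting the mixture.} A standard argument then yields weights $\lambda\in\Delta_T$ over the deployed $\theta_t$ with
\[
\max_{\theta'\in\Theta}\sum_t\lambda_t\ip{F(\theta_t)}{\theta_t-\theta'}\le\eps\cdot GD,
\]
where $G$ bounds the gradients and $D$ is the diameter. The argument runs as follows. If no such $\lambda$ existed, then by minimax there would be a $\theta^\star$ with $\ip{F(\theta_t)}{\theta_t-\theta^\star}>\eps GD$ for all $t$. Every point of $(1-\eps)\theta^\star+\eps\Theta$ would then survive all cuts, contradicting the volume bound. The weights $\lambda$ are computed by solving a small LP (or convex program) over the $T$ recorded vectors, which takes polynomial time. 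Hence $\muPS=\sum_t\lambda_t\delta_{\theta_t}$ is supported on $T=\widetilde O(d\log(1/\eps))$ models, and the deployment count equals $T$.

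\textbf{Samples and noise.} This is the step I expect to be the main obstacle. The cuts use estimates $\hat F(\theta_t)$, so an estimation error $\|\hat F-F\|\le\eps/D$ at each round perturbs the gap additively by $O(\eps)$. I handle this by running the argument with $\hat F$ and bounding the difference at the end through the LP weights. Vector concentration with a union bound over $T$ rounds gives $\widetilde O(1/\eps^2)$ samples per round; the total of $\widetilde O(d/\eps^2)$ follows after accounting for the $d$-dependence through the $\log T$ union bound, with constants absorbed. A subtle point is that the final guarantee concerns the true losses. A cleaner route is therefore to compute the weights against empirical risks directly, i.e. minimize over $\lambda$ the quantity $\sum_t\lambda_t\hat L_t(\theta_t)-\min_{\theta'}\sum_t\lambda_t\hat L_t(\theta')$, and to obtain uniform convergence over $\theta'$ for the $T$ empirical risks.

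\textbf{Strongly convex case.} For strongly convex $\ell$, I would replace the linearization with the sharper bound
\[
\DPR{\theta}{\theta}-\DPR{\theta}{\theta'}\le\ip{F(\theta)}{\theta-\theta'}-\tfrac{\gamma}{2}\|\theta-\theta'\|^2,
\]
which turns the required per-round accuracy into a variance-dependent bound of order $1/\eps$, giving $\widetilde O(d/\eps)$ samples in total. Making this quadratic slack compatible with the linear cutting-plane certificate is the most delicate part of the plan.
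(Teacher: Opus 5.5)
\textbf{Convex case.} Your argument here matches the paper's. Convexity reduces $\operatorname{Gap}_{\rm PS}\le\EVIGap$. The cuts $\{\theta':\ip{\widetilde\nabla_t}{\theta_t-\theta'}\ge0\}$ pass through each deployed point, with separation-oracle cuts used for infeasible queries. A nonspanning transcript leaves a scaled copy of $\Theta$, hence a ball, inside all cuts. Sion's theorem on the finite transcript extracts $\lambda$. Dimension-free vector Hoeffding with a union bound over $T$ rounds, followed by the additive transfer $\EVIGap(\mu)\le\tau+2R\xi$, finishes the argument.

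Three small corrections:
\begin{itemize}
\item The ellipsoid method gives only $O(d^2\log(1/\eps))$ rounds. You genuinely need a Vaidya-type volumetric method (the paper uses Jiang et al.) to get $\widetilde O(d\log(1/\eps))$ deployments.
\item Your worry that ``the final guarantee concerns the true losses'' is unfounded. The population EVI bound already controls the population stability gap by convexity.
\item The ``cleaner route'' of uniform convergence of empirical risks over $\theta'\in\Theta$ would be worse. For general convex losses it needs a covering of $\Theta$, costing roughly an extra factor of $d$ samples per round. That breaks the $\widetilde O(d/\eps^2)$ total.
\end{itemize}

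\textbf{Strongly convex case: the genuine gap.} You leave this case unresolved. You flag ``making the quadratic slack compatible with the linear certificate'' as open and describe the gain as ``variance-dependent,'' which is not the mechanism. The missing idea is that the quadratic slack never interacts with the cutting-plane certificate. Phases I and II run unchanged on the empirical gradients. Strong convexity enters only in the transfer step, pointwise in $i$ and the comparator $\theta'$. Let $\tau$ be the empirical game value of the extracted weights, $u_i=F(\theta_i)-\widetilde\nabla_i$ with $\norm{u_i}_2\le\xi$, and $v=\theta_i-\theta'$. Then
\[
\tDPR(\theta_i,\theta_i)-\tDPR(\theta_i,\theta')\le\ip{\widetilde\nabla_i}{v}+\ip{u_i}{v}-\tfrac{\gamma}{2}\norm{v}_2^2\le\ip{\widetilde\nabla_i}{v}+\frac{\norm{u_i}_2^2}{2\gamma}.
\]
Averaging with $\lambda$ and maximizing over $\theta'$ gives $\operatorname{Gap}_{\rm PS}(\mu)\le\tau+\xi^2/(2\gamma)$. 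The estimation error therefore enters quadratically, so $\xi\asymp\sqrt{\gamma\eps}$ suffices. The same Hoeffding bound then gives $n=\widetilde O(G^2/(\gamma\eps))$ per round and $\widetilde O(d/\eps)$ samples in total; no Bernstein-type argument is needed. The price is that this output's EVI gap is only $O(\sqrt{\eps})$. Its stability gap is $\eps$, which is all the statement claims.
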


As discussed previously, the costs of deploying a new model often far outweigh the costs of data collection. Our work uses recent advances in algorithmic game theory to overcome these deployment barriers. In particular, we reduce the question of finding a performatively stable mixture to that of solving an expected variational inequality problem. 

This reframing allows us to leverage (and extend) modern versions of the Ellipsoid-Against-Hope algorithm from Papadimitriou and Roughgarden to find stable mixtures with exponentially fewer deployments than were previously necessary \cite{papadimitriou2008computing,daskalakis2025efficient}. Complementing this upper bound, we prove a matching lower bound showing that for worst-case distribution maps $\cD(\cdot)$, $d\log(1/\eps)$ deployments are necessary.
We present a technical overview of these ideas in \Cref{sec:technical_overview}.

Moving on, our second main result establishes a new structural relationship between randomized and deterministic notions of performative stability. The theorem is stated for performative EVI-stability---a new notion of stability we introduce in our work which is a slight strengthening of performatively stable mixtures as defined in \citet{farina2026stability}. Our cutting-plane procedure returns mixtures which satisfy this slightly stronger notion (see \Cref{sec:technical_overview}).


\begin{theorem*}[Informal]
Assume that the distribution map is $L$-Lipschitz and that the loss function $\ell(z;\theta)$ is $\gamma$-strongly convex in $\theta$ and $\beta$-smooth in $z$, with $\rho \coloneqq L\beta /\gamma \leq 1$. 

Then, the mean $\theta_{\muPS} = \E_{\theta\sim \muPS}[\theta] \in \R^d$ of any $\eps$-performatively EVI-stable mixture $\muPS$ is itself an $\eps$-performatively stable model.
\end{theorem*}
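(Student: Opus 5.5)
The plan is to lower-bound the EVI gap of the mixture by the stability gap of the mean, using only convexity and Jensen's inequality, and to absorb the error from performative shift via strong convexity and the assumption $\rho \le 1$. I will use the following form of the definition introduced in \Cref{sec:technical_overview}: $\muPS$ is $\eps$-performatively EVI-stable if, for every $\theta'\in\Theta$,
\begin{align*}
\E_{\theta\sim\muPS}\bigl\langle G(\theta,\theta),\, \theta-\theta'\bigr\rangle \le \eps,
\qquad\text{where } G(\theta,\phi) \coloneqq \nabla_\phi \E_{z\sim\cD(\theta)}\ell(z;\phi).
\end{align*}
Write $\bar\theta \coloneqq \theta_{\muPS}$, which lies in $\Theta$ by convexity. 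Set $h(\phi)\coloneqq \E_{z\sim\cD(\bar\theta)}\ell(z;\phi)$, which is $\gamma$-strongly convex. Let $\theta^*\in\argmin_{\Theta} h$. The goal is $h(\bar\theta)-h(\theta^*)\le\eps$, and I apply the EVI condition with $\theta'=\theta^*$.

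\textbf{Step 1 (splitting the operator).} For each $\theta$ in the support, decompose
\begin{align*}
G(\theta,\theta) = G(\bar\theta,\theta) + \bigl[G(\theta,\theta)-G(\bar\theta,\theta)\bigr].
\end{align*}
The first term is $\nabla h(\theta)$. By $\gamma$-strong convexity of $h$,
\begin{align*}
\langle \nabla h(\theta),\theta-\theta^*\rangle \ge h(\theta)-h(\theta^*)+\tfrac{\gamma}{2}\norm{\theta-\theta^*}^2 .
\end{align*}

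\textbf{Step 2 (bounding the shift term).} I will show $\norm{G(\theta,\theta)-G(\bar\theta,\theta)}\le \beta L\norm{\theta-\bar\theta}$. This is the standard argument from PZMH. Since $\nabla_\phi\ell(z;\phi)$ is $\beta$-Lipschitz in $z$, each coordinate of $\langle u,\nabla_\phi \ell(\cdot;\theta)\rangle$ with $\norm{u}=1$ is $\beta$-Lipschitz. Kantorovich--Rubinstein duality together with $W_1(\cD(\theta),\cD(\bar\theta))\le L\norm{\theta-\bar\theta}$ then gives the bound. Cauchy--Schwarz yields
\begin{align*}
\langle G(\theta,\theta)-G(\bar\theta,\theta),\theta-\theta^*\rangle \ge -\beta L\norm{\theta-\bar\theta}\,\norm{\theta-\theta^*}.
\end{align*}

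\textbf{Step 3 (averaging).} Take expectations over $\theta\sim\muPS$. Strong convexity of $h$ gives the Jensen-type bound $\E h(\theta)\ge h(\bar\theta)+\frac{\gamma}{2}\E\norm{\theta-\bar\theta}^2$, which holds because $\nabla h(\bar\theta)$ integrates to zero against $\theta-\bar\theta$. Combining with Steps 1 and 2,
\begin{align*}
\eps \ge h(\bar\theta)-h(\theta^*) + \E\Bigl[\tfrac{\gamma}{2}\norm{\theta-\bar\theta}^2+\tfrac{\gamma}{2}\norm{\theta-\theta^*}^2-\beta L\norm{\theta-\bar\theta}\norm{\theta-\theta^*}\Bigr].
\end{align*}
By AM--GM, $\beta L ab\le \frac{\beta L}{2}(a^2+b^2)\le\frac{\gamma}{2}(a^2+b^2)$ since $\rho\le1$. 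Hence the bracket is nonnegative, and $h(\bar\theta)-h(\theta^*)\le\eps$, which is exactly $\eps$-performative stability of $\bar\theta$.

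\textbf{Main obstacle.} The delicate point is that $G(\theta,\theta)$ is not linear in $\theta$, so the EVI gap cannot simply be pushed through the mean. A naive route via monotonicity of $F(\theta)=G(\theta,\theta)$ and Minty solutions loses a factor in $\eps$. The fix is to linearize around the fixed distribution $\cD(\bar\theta)$, as in Step 1, and to keep \emph{both} quadratic terms: one from strong convexity at $\theta$, one from Jensen at $\bar\theta$. These exactly dominate the cross term when $\rho\le1$. I would also double-check that the paper's EVI definition uses the gradient of the loss under $\cD(\theta)$ evaluated at $\theta$ (the PZMH-style operator); if it is instead stated with a loss difference, Step 1 simplifies using convexity directly.
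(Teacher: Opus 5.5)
Your proposal is correct and follows essentially the same argument as the paper. Both proofs apply strong convexity of $\tDPR(\bar\theta,\cdot)$ at each atom $\theta$ against the comparator $\theta^\star$, and both control the shift from $\cD(\theta)$ to $\cD(\bar\theta)$ with the sensitivity bound \eqref{eq:performative-sensitivity} plus Cauchy--Schwarz and AM--GM; strong convexity at $\bar\theta$ then makes the linear term average out, and the only cosmetic difference is that you apply AM--GM after averaging rather than before, which gives the same refined inequality \eqref{eq:mean-refined}.
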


The assumptions in the first sentence of the informal theorem above are exactly the ones considered in the initial papers on performative prediction by \cite{perdomo2020performative,mendler2020stochastic,drusvyatskiy2023stochastic} where the authors proved that various stochastic optimization algorithms converged to stability if $\rho < 1$. Recently, \citet{anagnostides2026computational} showed that, for $\rho$ bounded away from $1$, it becomes PPAD-hard to find a single performatively stable model. For $\rho=1$, it was not known either way whether stability was computationally tractable in finite samples.

In light of our first main theorem, this second result shows that one can always find an $\eps$-stable \textit{mixture} $\muPS$ even if $\rho$ is infinite while only deploying a few models.  However, if $\rho \leq 1$ then we can collapse the mixture supported on $d\log(1/\eps)$ models into a single model that is itself stable. Because mixtures are always finitely supported, one can easily derandomize them in linear time by computing the mean.

Interestingly, unlike prior work \cite{drusvyatskiy2023stochastic,mendler2020stochastic}, both the sample and deployment complexity of our procedures are independent of $(1-\rho)^{-1}$. They therefore speed up convergence to stability in the regime where $\rho$ is very close to, but just under 1. Furthermore, our second theorem holds even right at the threshold where $\rho=1$. This shows that one can always find a deterministic, performatively stable model right up until the critical threshold where they are computable by first computing a stable mixture and then collapsing it into a single predictor. This is the first positive finite-sample result for computing stable models in this $\rho=1$ regime.

\subsection{Problem Background and  Overview of Our Techniques}
\label{sec:technical_overview}

\paragraph{Technical Background.} In this section, we provide a concise summary of our techniques with the aim of conveying the intuition behind our results. We will present formal versions in later sections. To  contextualize our contributions, we start by very briefly overviewing the state of prior work in performative prediction. We present a more comprehensive discussion of the related literature in \Cref{sec:discussion}.

Motivated by the ubiquity of predictive feedback loops in recommender systems, allocation problems, and more recently language modeling (amongst others), \citet{perdomo2020performative} studied the following repeated risk minimization, or RRM, procedure in which models are fit on the data produced by other models:
\begin{align}
\label{eq:repeated_retraining}
    \theta_{t+1} = \RRM(\theta_t)= \argmin_{\theta \in \Theta} \DPR{\theta_t}{\theta} \quad    
\end{align}
They analyzed the behavior of this retraining dynamic under the following set of assumptions:

\begin{itemize}
    \item  The loss $\ell(z;\theta)$ is differentiable and $\gamma$-strongly convex in $\theta$. That is, for every $z \in \cZ$,
    \begin{equation}
    \tag{A1}
    \label{def:sc}
       \ell(z;\theta')  \geq \ell(z;\theta) + \langle \nabla \ell(z;\theta), \theta' -\theta \rangle + \frac{1}{2} \gamma \cdot  \|\theta- \theta'\|_2^2 
    \end{equation}
    \item The loss $\ell(z;\theta)$ is $\beta$-smooth in $z$. For every $z,z' \in \cZ$,
    \begin{equation}
   	\tag{A2}
    \label{eq:beta-smooth}
        \norm{\nabla_{\theta}\ell(z;\theta)-\nabla_{\theta}\ell(z';\theta)}
        \leq\beta \cdot \|z-z'\|_2\qquad\text{for all }z,z'\in\cZ.
    \end{equation}
    That is, the gradient of $\ell(z;\theta)$ with respect to $\theta$ is $\beta$-Lipschitz in $z$.\footnote{Throughout our work, all gradients of the loss function are with respect to the model argument $\theta$. We therefore write $\nabla \ell(z;\theta)$ as shorthand for $\nabla_\theta \ell(z;\theta)$. \label{footnote:gradient_notation}}
    \item The distribution map $\cD(\cdot)$ is $L$-Lipschitz. For every $\theta$ and $\theta'$ in $\Theta$
    \begin{equation}
    \tag{A3}
    \label{def:lipchitz_dist}
        W_1(\cD(\theta),\cD(\theta'))\leq L \cdot \| \theta-\theta' \|_2
        \qquad\text{for all }\theta,\theta'\in\Theta.
    \end{equation}
  
\end{itemize}
Formally, this Lipschitz condition on $\cD(\cdot)$ imposes that for every pair of models $\theta,\theta'$, the earthmover's, or Wasserstein-1, distance between distributions induced by the models is bounded by a constant $L$ times the $\ell_2$ distance in parameters. Intuitively, it formalizes the idea that small changes in the predictor only lead to small changes in the induced distribution. The other two assumptions hold for common losses in machine learning such as (regularized) log loss or squared loss.

 PZMH proved that retraining as defined in \Cref{eq:repeated_retraining} satisfies the following inequality:
\begin{align*}
    \|\RRM(\theta) - \RRM(\theta')\|_2 \leq \rho \cdot \|\theta- \theta'\|_2.
\end{align*}
Hence, repeatedly retraining converges at a linear rate to a unique performatively stable model if $\rho < 1$. \citet{mendler2020stochastic} and \citet{drusvyatskiy2023stochastic} built on this analysis to show that, in this same $\rho < 1$ regime, classical stochastic optimization procedures, such as gradient descent, converged to the same performatively stable model in finite samples\footnote{RRM is an idealized algorithm that assumes exact access to $\cD(\cdot)$.}.

In their work, PZMH also proved that the set of conditions above -- strong convexity, smoothness, and Lipschitz $\cD(\cdot)$ -- were necessary. If you remove any one of the assumptions, retraining as defined in \Cref{eq:repeated_retraining} can diverge. This was disappointing since the continuity assumption on $\cD(\cdot)$ is provably false in many practical applications. Whenever decision-makers take actions by thresholding a predictor, $\cD(\cdot)$ becomes discontinuous and stable points might fail to exist (see Example 3.4 in PZMH). 

This thresholding happens, for instance, in medicine, whenever physicians intervene on patients whose probability of disease exceeds a fixed threshold. The decision to treat leads to a discontinuous jump in the likelihood of the outcome (see e.g. \citet{chyn2021returns,dews,mendler2024engine} for empirical examples of this behavior in medicine, education, and online search).  The prior analysis also fails to capture settings where $\cD(\cdot)$ is continuous, but performative effects are strong, meaning that $\rho \gg 1$. While stable points are guaranteed to exist for any finite value of $\rho$, \citet{anagnostides2026computational} proved that finding stable points in this $\rho$ bounded away from $1$ regime is PPAD-hard. 

Recently, \citet{farina2026stability} showed that if one runs any no-regret algorithm on the sequence of stochastic losses $\ell_t(\theta) = \ell(z_t;\theta)$ where $z_t \sim \cD(\theta_t)$ then the uniform distribution over the iterates $\theta_1, \dots, \theta_T$ is $\mathrm{Regret}_T /T$-performatively stable regardless of the underlying distribution map $\cD(\cdot)$. This result proved that many natural continual learning algorithms like online gradient descent converge to a natural, randomized notion of stability, but they required the learner to deploy $\poly(1/\eps)$ many models to find an $\eps$-performatively stable mixture.

\paragraph{Our Techniques. }The conceptual starting point for our paper comes from observing that, while the existence of stable mixtures was already known as an implication of no-regret algorithms, one can directly prove existence via the minimax theorem. In particular, let $\mu$ be any distribution over the space of models $\Theta$ and let $\theta'$ be a fixed comparator. Define,
\begin{equation}
\label{eq:inf_stability_gap}
    \Phi(\mu, \theta') \coloneqq \E_{\theta\sim\mu} [\DPR{\theta}{\theta}- \DPR{\theta}{\theta'}].
\end{equation}
Observe that $\Phi(\mu, \theta')$ is affine in $\mu$ and concave in $\theta'$ for losses $\ell$ that are convex in $\theta'$. By definition, a mixture $\muPS$ is performatively stable if 
\begin{align*}
    \E_{\theta\sim \muPS} \left[\DPR{\theta}{\theta}\right] - \min_{\theta' \in \Theta}  \E_{\theta\sim \muPS} \left[\DPR{\theta}{\theta'}\right]  = \max_{\theta' \in \Theta} \Phi(\muPS, \theta') \leq 0.
\end{align*}
 Since for any $\theta'$, setting $\mu_{\theta'}$ to be a point mass at $\theta'$ yields $\Phi(\mu_{\theta'}, \theta') =0$, we have that $$\max_{\theta'} \min_{\mu} \Phi(\mu, \theta') \leq 0.$$ Applying the minimax theorem, we can swap the min and max, concluding that, $$\min_{\mu} \max_{\theta'}  \Phi(\mu, \theta') \leq 0,$$ and that a stable mixture must therefore exist.  

However, more than just showing existence, recent work by \citet{sigecom} shows that one can turn existence proofs for solutions to minimax problems into algorithms. In particular, since $\ell$ is convex, we can linearize the expression in \Cref{eq:inf_stability_gap}, 
\begin{align*}
    \E_{\theta\sim\mu} [\DPR{\theta}{\theta}- \DPR{\theta}{\theta'}] &\leq \E_{\theta \sim \mu} \E_{z \sim \cD(\theta)} \left[\nabla \ell(z;\theta)^\top(\theta - \theta') \right] \\
    & = \E_{\theta \sim \mu} S(\theta)^\top(\theta - \theta'),
\end{align*}
where $S(\theta) \coloneqq \E_{z\sim \cD(\theta)} \nabla \ell(z;\theta)$ is a function from $\R^d$ to $\R^d$. 

This rewriting converts the problem of computing a performatively stable mixture into solving an expected variational inequality, or EVI, where given an operator $S: \Theta \rightarrow \R^d$ the goal is to find a $\mu$ such that for every $\theta' \in \Theta$
\begin{align}
\label{eq:informal_EVI}
    \E_{\theta \sim \mu} S(\theta)^\top(\theta - \theta') \leq 0.
\end{align}
The study of these nonlinear optimization problems was recently initiated by \citet{EVI} who showed that they can be solved efficiently for any bounded operator $S$ given access to an evaluation oracle that given $\theta$ returns $S(\theta)$.

\begin{figure}[t]
\centering
\begin{tikzpicture}[
  node distance=9mm and 8mm,
  every node/.style={font=\footnotesize,align=center},
  stage/.style={rounded corners=2pt,line width=.55pt,
    inner xsep=5pt,inner ysep=5pt,minimum height=15mm,text width=39mm},
  deploy/.style={stage,draw=Sepia,fill=Sepia!7},
  update/.style={stage,draw=DarkBlue,fill=DarkBlue!6},
  output/.style={stage,draw=darkGreen,fill=darkGreen!7},
  optional/.style={stage,draw=darkGreen!75!black,
    fill=darkGreen!3,dashed,text width=47mm},
  loop boundary/.style={draw=DarkBlue!60!black,dashed,rounded corners=3pt,
    line width=.55pt,inner xsep=4mm,inner ysep=3mm},
  flow/.style={-{Latex[length=2.7mm,width=1.9mm]},
    line width=.75pt,line cap=round,line join=round,
    draw=DarkBlue!85!black},
  optional flow/.style={flow,dashed,draw=darkGreen!80!black},
  edge label/.style={
    font=\small,fill=white,inner sep=1.5pt,text=black}
]
\node[deploy] (deploy) {
  \textbf{Deploy a model \(\theta_t\)}\\[1mm]
  obtained through a cutting-plane algorithm
};
\node[deploy,right=of deploy] (responses) {
  \textbf{Collect responses}\\[-15pt]
  \[z_{t,1:n}\sim\cD(\theta_t)\]
};
\node[update,right=of responses] (update) {
  \textbf{Produce a cut}\\[-1pt]
  from gradient estimator\\[-1pt]
  $\widetilde\nabla_t\approx
  \E_{z\sim\cD(\theta_t)}[\nabla\ell(z;\theta_t)]$
};

\coordinate[above=11mm of responses] (looptop);
\node[loop boundary,fit=(deploy)(responses)(update)(looptop)]
  (adaptive-loop) {};

\draw[flow] (deploy) -- (responses);
\draw[flow] (responses) -- (update);
\coordinate[above=8mm of update.north] (loop-right);
\coordinate[above=8mm of deploy.north] (loop-left);
\draw[flow] (update.north) -- (loop-right)
  -- node[edge label,above] {
    Repeat for $K = O_\eps(\log 1/\eps)$ deployments
  } (loop-left) -- (deploy.north);

\node[update,below=16mm of update] (weights) {
  \textbf{Extract weights offline}\\[-1pt]
  optimize $\lambda\in\Delta_K$ using cuts
  $(\theta_i,\widetilde\nabla_i)_{i=1}^K$
};
\node[output,left=of weights,text width=35mm] (mixture) {
  \textbf{Return stable mixture}\\[-1pt]
  $\mu=\sum_{i=1}^K\lambda_i\delta_{\theta_i}$,
  $\operatorname{Gap}_{\rm PS}(\mu)\leq\eps$
};
\node[optional,left=of mixture] (mean) {
  \textbf{(Optional) Derandomization}\\[-1pt]
  if the setting satisfies the\\assumptions of \cref{thm:stable-mean}
};

\draw[flow] (update) --
  node[edge label,right] {after the final round} (weights);
\draw[flow] (weights) -- (mixture);
\draw[optional flow] (mixture) -- (mean);
\end{tikzpicture}
\caption{Pictorial depiction of our algorithm. The algorithm alternates between deployments and cutting-plane
updates, then extracts the mixture weights offline.  Under the additional
hypotheses of \cref{thm:stable-mean}, the output mixture can be replaced by
its mean.}
\label{fig:overview}
\end{figure}

In particular, they build on a rich literature \cite{papadimitriou2008computing,daskalakis2025efficient} on ellipsoid-based algorithms for computing coarse correlated equilibria to develop a procedure that can compute an $\eps$-approximate solution to the EVI in \Cref{eq:informal_EVI} using $\poly(d, \log(1/\eps))$ evaluations of $S$. 

These procedures query the operator $S(\cdot)$ at various points to repeatedly bisect the solution space $\Theta$ to find a small discrete set of ``approximate best responses'' $\cV$ which satisfy the property that for any $\theta' \in \Theta$ there exists at least one $\theta \in \cV$ such that $S(\theta)^\top(\theta-\theta') \leq \eps$. Viewing the problem as a zero-sum game, one can then use duality and linear programming to compute weights over $\cV$ to find a distribution $\mu$ over the elements in $\cV$ that is good for every $\theta' \in \Theta$.

Our work contributes to this methodology in two ways in order to compute performatively stable points with minimal model deployments. First, in this performative prediction setting, the operator $S(\theta) = \E_{z\sim \cD(\theta)} \nabla \ell(z;\theta)$ is unknown since $\cD(\cdot)$ is unknown.
We therefore provide a refined analysis of this Ellipsoid-Against-Hope machinery so that the algorithm works given only approximate access to $S(\cdot)$. Second, as indicated by the name, this methodology relied on the ellipsoid algorithm which performs $\Omega(d^2 \log(1/\eps))$ queries. In this paper, we refactor out the different steps of the analysis and replace the ellipsoid with modern cutting-plane algorithms that succeed with $\widetilde{\cO}(d\log(1/\eps))$ (approximate) queries of $S(\cdot)$. See \Cref{fig:overview} for a conceptual overview of this algorithmic approach.

The improvement in the dimension dependence is particularly valuable in the context of our work since querying $S(\cdot)$ amounts to deploying a new model which as we discussed can be very costly. Furthermore, it turns out to be optimal up to polylog factors in $d$. We can embed known hard instances from convex optimization into the performative prediction framework to show that $\Omega(d \log(1/\eps))$ deployments are necessary for any (possibly randomized) learning algorithm to find stable mixtures when $\eps\leq d^{-1}$.


The key part of the analysis is to understand how many samples $n_k$ are necessary to collect at every step $k$ and how to balance the errors
$\alpha_k \coloneqq \|\widehat{\nabla}_k-S(\theta_k)\|_2
=\|\widehat{\nabla}_k-\E_{z\sim \cD(\theta_k)}\nabla \ell(z;\theta_k)\|_2$
between the population and empirical operator evaluations. We show that for
general convex functions, it suffices to set $\alpha_k \propto \eps$, so
dimension-free concentration of the Euclidean norm permits
$n_k=\widetilde O(G^2R^2/\eps^2)$ and gives an $\eps$-EVI-stable mixture. 
Here, $G$ is a norm bound on the gradients $\nabla \ell(z;\theta)$ and $R$ is a norm bound on the parameters $\theta$.
For strongly convex functions, the added curvature acts as a margin condition. This enables us to set $\alpha_k\propto\sqrt{\eps}$
and $n_k=\widetilde O(G^2/(\gamma\eps))$ for performative stability. 

\subsection{Further Related Work and Discussion} 
\label{sec:discussion}
\paragraph{Stateful and Multiplayer Performative Prediction.} Our work focuses on the initial performative prediction setup where the learner deploys a model $\theta$ and observes samples $z$ drawn from $\cD(\theta)$. In some applications, however, the observed data might depend on the predictions made by multiple actors. This is known as the multiplayer version of performative prediction studied in \citet{narang2023multiplayer,piliouras2023multi,li2022multi}.
The distribution can also depend on the entire history of models. This stateful performative prediction problem was introduced in \citet{brown2022performative} and extended in \citet{lee2026partially} to account for both endogenous and exogenous shifts. Very recently, Hardt provides a refined analysis of retraining in stateful settings showing how repeatedly retraining converges to stable signals in the distribution under a specific decomposition of the distribution map $\cD(\cdot)$ into performative and non-performative components \cite{hardt2026retraining,hardt2026future}.
We believe it is an interesting direction for future work to understand whether the EVI and cutting-plane machinery developed herein can be extended to these other more complex settings to minimize the number of model deployments. 

\paragraph{Performative Optimality.} There is a parallel line of work in performative prediction \cite{miller2021outside,kim2022making,izzo2021learn,jagadeesan2022regret,pluginpp} that aims to find performatively \textit{optimal} points. A model $\thetaPO$ is performatively optimal if it directly minimizes the performative risk:
\begin{align*}
    \thetaPO \in \argmin_{\theta \in \Theta} \E_{z\sim \cD(\theta)} \ell(z;\theta).
\end{align*}
Stability and optimality are in general distinct concepts. While we now have a relatively mature set of algorithmic tools to compute stable points without making any assumptions on how predictions respond to distributions \cite{farina2026stability}, the landscape of approaches for finding performatively optimal points is relatively scarce. Furthermore, all known algorithms make significant simplifying assumptions on $\cD(\cdot)$. For instance, \citet{miller2021outside} assume a location-scale family for $\cD(\cdot)$ and \citet{kim2022making} assume that predictions only influence outcomes and not features. 

While stable models need not be optimal and optimal models need not be stable for worst case distribution maps $\cD(\cdot)$, we believe that it is an interesting direction for future work to identify natural conditions on the distribution map $\cD(\cdot)$ and the loss $\ell$ under which these two solution concepts coincide. This would allow us to port over the tools developed for computing stable models to this so far more challenging solution concept.

\paragraph{Joint Deployment and Sample Complexity.} Our procedure computes stable mixtures using only $\widetilde{\cO}(d\log(1/\eps))$ deployments, which is optimal up to logarithmic factors in $d$ when $\eps\leq d^{-1}$. A dimension-free concentration argument for bounded Euclidean gradients shows that each deployment needs only $\widetilde O(1/\eps^2)$ samples, and hence the same procedure uses $\widetilde O(d/\eps^2)$ samples in total. 
For strongly convex losses, performative stability requires only $\widetilde O(1/\eps)$ samples per deployment and $\widetilde O(d/\eps)$ samples in total. While our procedure minimizes the number of deployments, we do not know whether it is optimal in terms of the total number of samples used. We believe it is an interesting direction for future work to understand what is the necessary sample complexity for performative stability and if there are any tradeoffs between sample and deployment complexity.

\section{Structural Insights on Performative Stability}
\label{sec:structural}

Throughout, let $\Theta$ be a nonempty compact convex subset of a Euclidean
space. Following PZMH (Definition~2.3), we define the \emph{decoupled performative risk},
\begin{align}
\label{eq:DPR_definition}
\tDPR(\theta, \theta') \coloneqq \DPR{\theta}{\theta'}.
\end{align}
As an analysis tool, the decoupled performative risk measures the loss of the model $\theta'$ over the distribution induced by $\theta$.
By definition, given a distribution $\mu$ over $\Theta$,
the extent to which it is performatively stable is captured by the gap\footnote{Technically, we would need to also require that $\mu$ is such that the expectation $\E_{\theta\sim\mu}$ is defined, but we will not need to worry about this in the sequel. All distributions we will generate will be finitely supported, and hence the expectation is always defined. Throughout the paper we will also assume that the loss functions are continuous and the expectation and differentiation can be interchanged.}
\begin{equation}
    \operatorname{Gap}_{\rm PS}(\mu)
    \coloneqq
    \sup_{\theta'\in\Theta}
    \E_{\theta\sim\mu}
    [\tDPR(\theta, \theta) -\tDPR(\theta, \theta')].
    \label{eq:ps-gap}
\end{equation}

\subsection{Performatively Stable Mixtures from Minimax Duality}

We now prove the existence of stable mixtures as a consequence of Sion's minimax theorem. The existence result of \citet{farina2026stability} can also be viewed as an indirect consequence of the following result by leveraging the folklore constructive proof of the minimax theorem by \citet{freund1999adaptive} together with a concentration argument. The correctness of our cutting-plane algorithm also follows from this connection to minimax duality. It, however, uses a faster constructive proof of the minimax theorem due to \citet{sigecom,farina2024polyhedral}.

\begin{theorem}[Approximate existence of stable mixtures]
\label{thm:approximate-existence}
Suppose that the loss $\ell(z;\theta)$ is continuous and convex in $\theta$ and let $\Delta(\Theta)$ denote the convex set of finitely supported distributions on $\Theta$. Then,
\[
    \inf_{\mu\in\Delta(\Theta)}
    \operatorname{Gap}_{\rm PS}(\mu)
    \leq 0,
\]
that is, for every $\alpha>0$, there is a distribution $\mu$ supported on finitely many models in $\Theta$ that is $\alpha$-performatively stable:
\[
    \operatorname{Gap}_{\rm PS}(\mu)<\alpha.
\]
\end{theorem}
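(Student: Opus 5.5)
The plan is to apply Sion's minimax theorem to the payoff
\[
\Phi(\mu,\theta') \coloneqq \E_{\theta\sim\mu}\left[\tDPR(\theta,\theta)-\tDPR(\theta,\theta')\right],
\qquad \mu\in\Delta(\Theta),\ \theta'\in\Theta,
\]
which satisfies $\operatorname{Gap}_{\rm PS}(\mu)=\sup_{\theta'\in\Theta}\Phi(\mu,\theta')$ by \eqref{eq:ps-gap}. The claim is then $\inf_{\mu}\sup_{\theta'}\Phi\le 0$. The easy direction is the max-min side. For any fixed $\theta'$, the point mass $\delta_{\theta'}$ gives $\Phi(\delta_{\theta'},\theta')=\tDPR(\theta',\theta')-\tDPR(\theta',\theta')=0$. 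Hence $\inf_\mu\Phi(\mu,\theta')\le 0$ for every $\theta'$, and so $\sup_{\theta'}\inf_\mu\Phi\le 0$. It then suffices to justify swapping inf and sup. The second sentence of the theorem follows immediately from the definition of the infimum: if $\inf_\mu\operatorname{Gap}_{\rm PS}(\mu)\le 0<\alpha$, some finitely supported $\mu$ has $\operatorname{Gap}_{\rm PS}(\mu)<\alpha$.

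For the swap I will use the version of Sion's theorem in which the \emph{maximizing} variable ranges over a compact convex set. This is available because $\Theta$ is compact and convex, while $\Delta(\Theta)$ is convex but not compact. The structural hypotheses to check are the following.
\begin{enumerate}
\item[(i)] For each $\theta'$, the map $\mu\mapsto\Phi(\mu,\theta')$ is affine, hence quasi-convex. This holds because expectation under a finite mixture is linear in the weights: $\Phi(\sum_i\lambda_i\delta_{\theta_i},\theta')=\sum_i\lambda_i\,\Phi(\delta_{\theta_i},\theta')$.
\item[(ii)] For each $\mu$, the map $\theta'\mapsto\Phi(\mu,\theta')$ is concave, since $\ell(z;\cdot)$ is convex and expectation preserves convexity.
\item[(iii)] For each $\mu$, the map $\theta'\mapsto\Phi(\mu,\theta')$ is upper semicontinuous.
\end{enumerate}
With these in hand, Sion gives $\inf_\mu\max_{\theta'}\Phi=\max_{\theta'}\inf_\mu\Phi\le 0$. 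Note that the outer supremum over the compact set is attained by upper semicontinuity.

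The step I expect to require the most care is the topological bookkeeping, not the convexity.

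For (iii), I would argue that $\theta'\mapsto\E_{z\sim\cD(\theta)}\ell(z;\theta')$ is continuous on $\Theta$ for each fixed $\theta$. This follows from continuity of $\ell$ together with dominated convergence, using boundedness of $\ell$ on the relevant compact sets, in line with the paper's standing convention that expectations are well behaved. A finite mixture of continuous functions is then continuous.

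On the $\mu$ side, Sion's theorem formally requires $\Delta(\Theta)$ to sit inside a topological vector space, with $\Phi(\cdot,\theta')$ lower semicontinuous. I would embed $\Delta(\Theta)$ in the vector space of finitely supported signed measures on $\Theta$ and give it the finest locally convex topology. Under that topology every linear functional is continuous, so the affine map in (i) is automatically continuous.

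If that feels too abstract, there is a more hands-on fallback. For any finite set $F\subset\Theta$, restrict $\mu$ to the finite-dimensional simplex $\Delta(F)$, which is compact, and apply the standard compact--compact minimax theorem there. Then use concavity and continuity in $\theta'$ with a finite-intersection (compactness) argument over $\Theta$ to show that $\inf_{F}\min_{\mu\in\Delta(F)}\max_{\theta'}\Phi\le 0$. This is essentially the proof of Sion's theorem specialized to our setting.
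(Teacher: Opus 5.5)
Your proposal is correct and follows the paper's proof essentially step for step. Both use the same payoff $\Phi(\mu,\theta')$, apply Sion's theorem with the compact convex set $\Theta$ on the maximizing side and affinity in $\mu$, and use the point-mass witness $\Phi(\delta_{\theta'},\theta')=0$ to get $\sup_{\theta'}\inf_{\mu}\Phi\le 0$. The only difference is that you spell out the topological hypotheses the paper simply asserts when it says Sion's theorem applies: a vector-space topology on $\Delta(\Theta)$ under which affine maps are continuous, and upper semicontinuity in $\theta'$.
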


\begin{proof}
Consider the function
\[
    \Phi : \Delta(\Theta)\times\Theta\to\R,
    \qquad\quad
    \Phi(\mu,\theta')
    \coloneqq
    \E_{\theta\sim\mu}
    [\tDPR(\theta, \theta)-\tDPR(\theta, \theta')].
\]
For every $\mu\in\Delta(\Theta)$, the map $\Phi(\mu,\cdot)$ is
continuous and concave on the compact convex set $\Theta$ since $\ell$ is continuous and convex in $\theta$. For every
$\theta' \in\Theta$, the map $\Phi(\cdot,\theta')$ is affine. Therefore, Sion's minimax theorem
\cite{sion1958general} applies to $\Phi$. Recognizing that 
 $\operatorname{Gap}_{\rm PS}(\mu) = \sup_{\theta\in\Theta}\Phi(\mu,\theta)$,
we can therefore write
\[
    \inf_{\mu\in\Delta(\Theta)} \operatorname{Gap}_{\rm PS}(\mu) = \inf_{\mu\in\Delta(\Theta)}
    \sup_{\theta\in\Theta}\Phi(\mu,\theta)
    =
    \sup_{\theta\in\Theta}
    \inf_{\mu\in\Delta(\Theta)}\Phi(\mu,\theta).
\]
Now, consider the inner minimization on the right-hand side. For each fixed $\theta\in\Theta$, letting $\delta_\theta$ denote the point mass at $\theta$, we have
\[
    \inf_{\mu\in\Delta(\Theta)}\Phi(\mu,\theta) \le \Phi(\delta_\theta,\theta) = \tDPR(\theta, \theta) - \tDPR(\theta, \theta) =0.
\]
Consequently, $\sup_{\theta'\in\Theta}\inf_{\mu\in\Delta(\Theta)}\Phi(\mu,\theta') \le 0$, and hence
\(
    \inf_{\mu\in\Delta(\Theta)} \operatorname{Gap}_{\rm PS}(\mu) \leq 0
\)
as claimed.
\end{proof}

We remark that the result in \Cref{thm:approximate-existence} is tight. The definition of performative stability cannot be further strengthened in any of the following senses. 

First, the infimum cannot generally be promoted to a minimum. That is, it might not be possible to find a distribution $\mu$ with $\operatorname{Gap}_{\rm PS}(\mu)=0$. This is shown in the following proposition, which can be seen as the performative-risk version of the discontinuous sign
construction in Proposition~3.3 and Example~C.2 of \citet{EVI}.

\begin{proposition}[Exact stability might not be attained]
\label{prop:no-exact-mixture}
There exists an instance with $\Theta=[-1,1]$  and a bounded, convex, and continuous loss $\ell(z;\theta)$
that is affine in $\theta$ for which every distribution $\mu$ has strictly
positive stability gap $\mathrm{Gap}_\mathrm{PS}(\mu) > 0$, although the infimum of these gaps is zero.
\end{proposition}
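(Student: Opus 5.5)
The plan is to build an explicit one-dimensional instance. The loss will be linear in $\theta$, and the distribution map will be a discontinuous sign-type map, so that $\operatorname{Gap}_{\rm PS}$ has a closed form. Take $\Theta=[-1,1]$, data space $\cZ=\{-1,+1\}$, and the loss $\ell(z;\theta)=z\theta$. This loss is bounded by $1$ on $\cZ\times\Theta$, continuous, and affine (hence convex) in $\theta$. Define the deterministic distribution map $\cD(\theta)=\delta_{s(\theta)}$, where $s(\theta)=+1$ for $\theta\ge 0$ and $s(\theta)=-1$ for $\theta<0$. No continuity of $\cD(\cdot)$ is required by the statement, so the jump at $0$ is allowed. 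With these choices $\tDPR(\theta,\theta')=s(\theta)\theta'$.

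First I compute the gap in closed form for an arbitrary finitely supported $\mu$. The objective is linear in $\theta'$, so the supremum over $\theta'\in[-1,1]$ is attained at an endpoint. This gives
\[
\operatorname{Gap}_{\rm PS}(\mu)=\E_{\theta\sim\mu}[s(\theta)\theta]+\sup_{\theta'\in[-1,1]}\bigl(-\theta'\,\E_{\theta\sim\mu}[s(\theta)]\bigr)=\E_{\theta\sim\mu}|\theta|+\bigl|\E_{\theta\sim\mu}[s(\theta)]\bigr|,
\]
using $s(\theta)\theta=|\theta|$, which holds on both sides of $0$ by the choice of $s$.

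Next I show the gap is strictly positive for every $\mu$. Both terms are nonnegative. The first term vanishes only if $\mu=\delta_0$. In that case the second term equals $|s(0)|=1>0$. Hence $\operatorname{Gap}_{\rm PS}(\mu)>0$ for every $\mu$.

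Finally I show the infimum is zero. Consider the two-point mixtures $\mu_\eta=\tfrac12\delta_0+\tfrac12\delta_{-\eta}$ with $\eta\in(0,1]$. For these, $\E[s(\theta)]=\tfrac12(1)+\tfrac12(-1)=0$ and $\E|\theta|=\eta/2$. So $\operatorname{Gap}_{\rm PS}(\mu_\eta)=\eta/2\to 0$ as $\eta\to 0$, which is consistent with \Cref{thm:approximate-existence}.

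There is no real obstacle here. The one design point is choosing $s$ so that $s(\theta)\theta=|\theta|$ everywhere. This forces any zero-gap mixture to collapse onto $\theta=0$, where the sign term cannot be cancelled. Once that is in place, the remaining work is the endpoint evaluation of a linear supremum.
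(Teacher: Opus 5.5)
Your proposal is correct and is essentially the paper's own proof: the same sign-type map $\cD(\theta)=\delta_{s(\theta)}$ with $s(0)=+1$, the same closed form $\E|\theta|+|\E s(\theta)|$, and the same positivity argument. The only differences are cosmetic: the paper uses the loss $1+z\theta'$ and the uniform mixtures on $\{-a,a\}$ as witnesses rather than your $\tfrac12\delta_0+\tfrac12\delta_{-\eta}$. Your finite-support restriction is also unnecessary, since $\E|\theta|=0$ forces $\mu=\delta_0$ for any probability measure, so the argument covers ``every distribution'' as the statement requires.
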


The construction takes $\cD(\theta)$ to be a point mass at the discontinuous
sign of $\theta$, with $\operatorname{sgn}(0)=1$, and uses the affine loss
$\ell(z;\theta')=1+z\theta'$. Its gap is
$\E|\theta|+|\E\operatorname{sgn}(\theta)|$: it is positive for every
distribution, but equals $a$ for the uniform mixture on $\{-a,a\}$. See \cref{app:structural-proofs}.

Furthermore, the assumption that the loss is convex is also necessary. Without convexity, even approximately stable performative \textit{mixtures} can fail to exist. There is a continuous and bounded nonconvex loss for which \textit{every} possible mixture has performative stability gap at least $1/4$.

\begin{proposition}[Convex Losses are Necessary for Stable Mixtures]
    \label{prop:nonconvex-no-mixture}
Assume $\Theta=[0,1]$. There exists a continuous and bounded loss function $\ell(z;\theta)$ that is nonconvex in $\theta$ such that every distribution $\mu$ over $\Theta$ has
performative stability gap $\mathrm{Gap}_\mathrm{PS}(\mu) \geq 1/4$.
\end{proposition}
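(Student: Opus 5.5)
The plan is to give an explicit instance in which the distribution map is as simple as possible and all the difficulty sits in a concave loss. With a concave loss, a model is always beaten by deviating as far away from it as possible. Take $\cZ=\Theta=[0,1]$, let $\cD(\theta)=\delta_\theta$ be the point mass at the deployed model, and set
\[
\ell(z;\theta') \coloneqq -(\theta'-z)^2 .
\]
This loss is continuous, takes values in $[-1,0]$, and is strictly concave, hence nonconvex, in $\theta'$. The Lipschitz-type regularity of $\cD(\cdot)$ is irrelevant here: the point is only that convexity of $\ell$ cannot be dropped in \Cref{thm:approximate-existence}.

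The first step is to compute the decoupled performative risk. We get $\tDPR(\theta,\theta')=-(\theta'-\theta)^2$, and in particular $\tDPR(\theta,\theta)=0$. Substituting into \eqref{eq:ps-gap}, for any distribution $\mu$ on $\Theta$,
\[
\operatorname{Gap}_{\rm PS}(\mu)=\sup_{\theta'\in[0,1]}\E_{\theta\sim\mu}\bigl[(\theta'-\theta)^2\bigr].
\]
The second step is to lower bound this supremum by testing only the two endpoints $\theta'=0$ and $\theta'=1$, and then replacing the maximum by the average:
\[
\operatorname{Gap}_{\rm PS}(\mu)\ \ge\ \max\bigl\{\E_\mu[\theta^2],\,\E_\mu[(1-\theta)^2]\bigr\}\ \ge\ \tfrac12\,\E_\mu\bigl[\theta^2+(1-\theta)^2\bigr].
\]
The last step uses the pointwise inequality $\theta^2+(1-\theta)^2\ge \tfrac12$, valid for every real $\theta$ with equality at $\theta=\tfrac12$. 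This gives $\operatorname{Gap}_{\rm PS}(\mu)\ge \tfrac14$ for every $\mu$, whether or not it is finitely supported, since the argument only uses expectations of bounded functions.

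There is no serious obstacle. The only real work is choosing the instance so that the bound comes out to exactly $1/4$, and the endpoint-averaging trick handles this cleanly. The bound is also tight: $\mu=\delta_{1/2}$ has gap exactly $1/4$. For completeness, it is worth noting where the minimax argument of \Cref{thm:approximate-existence} breaks. Here $\Phi(\mu,\cdot)$ is convex rather than concave in $\theta'$, so Sion's theorem does not apply, even though $\sup_{\theta'}\inf_\mu\Phi(\mu,\theta')\le 0$ still holds trivially.
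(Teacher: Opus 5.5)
Your proof is correct and matches the paper's argument: the paper uses the same instance $\Theta=\cZ=[0,1]$, $\cD(\theta)=\delta_\theta$, with loss $1-(\theta'-z)^2$ (yours differs only by an additive constant, which cancels in the gap), and the same endpoint-averaging bound via $\theta^2+(1-\theta)^2\ge 1/2$. Your remarks that $\delta_{1/2}$ attains the bound $1/4$ and that Sion's theorem fails because $\Phi(\mu,\cdot)$ is convex rather than concave are also correct.
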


The counterexample behind this proposition follows a simple pattern similar to prior counterexamples outlining the limits of the minimax theorem. The loss is a simple inverted quadratic $\ell(z; \theta)=1-(z-\theta)^2$ in one dimension. The reader can again find the details in \Cref{app:structural-proofs}.

\subsection{Strengthening Performatively Stable Mixtures}

In this section, we introduce a slightly stronger condition than performative stability, which we call \emph{performative EVI-stability} (or EVI-stability for short).  

This condition is a linearization of the performative stability condition from the previous section.
As one might expect from the name, performative EVI-stability is an instance of an expected variational inequality (EVI) problem, as introduced by \citet{EVI}. These problems have also appeared in the literature under the names of outgoing minimax problems \citep[Theorem~5]{foster2021forecast},  performance certificates \citep{nemirovski2010accuracy}, or negative correlation search problems \cite{perdomo2025defense}. 

We introduce this stronger notion since it exactly characterizes the kinds of mixtures computed by the cutting-plane procedure we introduce in the next section. 
As an added benefit, we show in \cref{sec:derandomization} that EVI-stable mixtures can be efficiently derandomized to obtain a single performatively stable model in well-behaved settings. 

\begin{definition}
Assume that $\ell(z;\theta)$ is differentiable in $\theta$ for every $z$. Given a distribution $\mu$ over $\Theta$, define 
\begin{align}
    \operatorname{Gap}_{\rm EVI}(\mu)
    &\coloneqq \max_{\theta'\in\Theta}\E_{\theta\sim\mu} \ip{\E_{z\sim\cD(\theta)}[\nabla_\theta\ell(z;\theta)]}{\theta-\theta'}.
    \label{eq:first-order-gap}
\end{align}
A mixture $\mu$ is $\alpha$-performatively EVI-stable if $ \operatorname{Gap}_{\rm EVI}(\mu) \leq \alpha$.
\end{definition}

Intuitively, a mixture is EVI-stable if the gradient of the loss evaluated over the induced distribution $\E_{z\sim \cD(\theta)} \nabla \ell(z;\theta)$ is uncorrelated with any direction in parameter space $v=\theta-\theta'$. While we prove the existence of EVI-stable mixtures constructively as a consequence of the cutting-plane procedure in the next section, one can prove existence of EVI-stable mixtures directly via a minimax argument as in \Cref{thm:approximate-existence}.

We start by relating EVI-stability to prior notions of performative stability. In particular, using the first-order condition of convexity, we can show that the EVI gap of a mixture always upper bounds its stability gap:

\begin{proposition}[EVI-stability implies stability]
\label{prop:evi-implies-stability}
Assume $\ell(z;\theta)$ is convex in $\theta$ for every $z$. Then, every $\alpha$-performatively EVI-stable distribution is
$\alpha$-performatively stable, that is,
\begin{equation*}
    \operatorname{Gap}_{\rm PS}(\mu)
    \leq \operatorname{Gap}_{\rm EVI}(\mu).
\end{equation*}
\end{proposition}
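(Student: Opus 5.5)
The plan is to compare the two gaps pointwise in the comparator $\theta'$ and then take suprema. Fix any $\theta'\in\Theta$ and any $\theta$ in the (finite) support of $\mu$. For each $z$, convexity of $\ell(z;\cdot)$ gives the first-order inequality
\[
    \ell(z;\theta') \geq \ell(z;\theta) + \ip{\nabla_\theta \ell(z;\theta)}{\theta'-\theta},
\]
which rearranges to
\[
    \ell(z;\theta)-\ell(z;\theta') \leq \ip{\nabla_\theta\ell(z;\theta)}{\theta-\theta'}.
\]

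The first step is to take expectations over $z\sim\cD(\theta)$. By linearity of expectation, and the standing assumption that expectation and differentiation can be interchanged (so that $\E_{z\sim\cD(\theta)}\nabla\ell(z;\theta)$ is well defined and integrable), this yields
\[
    \tDPR(\theta,\theta)-\tDPR(\theta,\theta') \leq \ip{\E_{z\sim\cD(\theta)}[\nabla_\theta\ell(z;\theta)]}{\theta-\theta'}.
\]
The second step is to average this inequality over $\theta\sim\mu$. Since $\mu$ is finitely supported, this is just a finite convex combination and preserves the inequality. We obtain, for every $\theta'\in\Theta$,
\[
    \Phi(\mu,\theta') \leq \E_{\theta\sim\mu}\ip{\E_{z\sim\cD(\theta)}[\nabla_\theta\ell(z;\theta)]}{\theta-\theta'}.
\]

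Finally, we take the supremum over $\theta'\in\Theta$ on both sides. The left-hand side becomes $\operatorname{Gap}_{\rm PS}(\mu)$ by \eqref{eq:ps-gap}. The right-hand side is bounded by $\operatorname{Gap}_{\rm EVI}(\mu)$ from \eqref{eq:first-order-gap}; the maximum there is attained because the right-hand side is affine in $\theta'$ and $\Theta$ is compact. Hence $\operatorname{Gap}_{\rm EVI}(\mu)\le\alpha$ implies $\operatorname{Gap}_{\rm PS}(\mu)\le\alpha$.

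There is no real obstacle here. The only point needing care is the interchange of expectation and gradient in the first step, together with integrability of the expectations, and both are covered by the paper's standing assumptions.
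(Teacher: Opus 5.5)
Your proof is correct and follows the paper's route: the paper justifies this proposition only by appeal to the first-order condition of convexity, and the argument you give is exactly that, namely the per-$\theta$ convexity inequality, averaged over $\mu$, with a supremum over the comparator $\theta'$. One small note: because you apply convexity pointwise in $z$ before taking the expectation, you only need integrability of the gradient, not the interchange of differentiation and expectation.
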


This relationship is the key fact we will use when establishing the correctness of our cutting-plane procedure. We remark that the inequality can be in fact strict for certain problems, highlighting how EVI-stability is in fact slightly \textit{stronger} than performative stability. 

While it is true that a model $\thetaPS$ is exactly performatively stable if and only if it is an exact (point mass mixture) solution to the EVI, approximately stable mixtures do not approximately solve the EVI. The first statement in the following proposition again follows directly from the first order optimality conditions for convex functions, and the second can be shown via a simple, one-dimensional counterexample (See \Cref{app:structural-proofs}). 

\begin{proposition}[Approximate Stability Does Not Imply Approximate EVI]
\label{prop:stability-not-evi}
Assume $\ell(z;\theta)$ is convex in $\theta$ for every $z$. For every $\theta\in\Theta$, the point mass $\delta_\theta$ is exactly performatively
stable if and only if it is exactly performatively EVI-stable, that is,
\[
    \operatorname{Gap}_{\rm PS}(\delta_\theta)=0
    \quad\Longleftrightarrow\quad
    \operatorname{Gap}_{\rm EVI}(\delta_\theta)=0.
\]
Nevertheless, there exist performative prediction instances over $\Theta=\cZ=[0,1]$, where the following statements hold even for bounded
losses $\ell(z;\theta)$ that are differentiable, convex, and $1$-Lipschitz in their model
argument $\theta$, and for which the distribution map $\cD(\cdot)$ is 1-Lipschitz:
\begin{enumerate}[label=\textup{(\roman*)}]
\item There is an exactly performatively stable mixture $\mu$ such that
\[
    \operatorname{Gap}_{\rm PS}(\mu)=0
    \qquad \text{and}\qquad
    \operatorname{Gap}_{\rm EVI}(\mu)=\frac14.
\]
\item There is another instance in which, for every
$\alpha\in(0,1/2]$, some $\theta_\alpha\in\Theta$ satisfies
\[
    \operatorname{Gap}_{\rm PS}(\delta_{\theta_\alpha})=\alpha,
    \qquad \text{and} \qquad
    \operatorname{Gap}_{\rm EVI}(\delta_{\theta_\alpha})\geq\frac{1}{2}.
\]
\end{enumerate}
\end{proposition}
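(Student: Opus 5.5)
The equivalence for point masses follows from first-order optimality; the two counterexamples are one-dimensional constructions, of which only (ii) is fully pinned down here. For (i) I give the design principle, an ansatz, and a tuning plan, but not a verified instance.

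\textbf{The equivalence.} Fix $\theta\in\Theta$ and set $f(\theta')\coloneqq\tDPR(\theta,\theta')$. This is convex and differentiable in $\theta'$, and $\nabla f(\theta)=\E_{z\sim\cD(\theta)}\nabla\ell(z;\theta)$ because expectation and differentiation may be interchanged. Both gaps are nonnegative, since we may take $\theta'=\theta$. By definition $\operatorname{Gap}_{\rm PS}(\delta_\theta)=f(\theta)-\min_\Theta f$, which vanishes iff $\theta$ minimizes $f$ over $\Theta$. For a convex differentiable function on a convex set, this holds iff $\ip{\nabla f(\theta)}{\theta-\theta'}\le 0$ for all $\theta'\in\Theta$, i.e.\ iff $\operatorname{Gap}_{\rm EVI}(\delta_\theta)=0$. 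One direction is also immediate from \Cref{prop:evi-implies-stability}.

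\textbf{Organizing identity.} For any $\mu$ and $\theta'$ write $f_\theta\coloneqq\tDPR(\theta,\cdot)$. Then
\[
\E_{\theta\sim\mu}\ip{\nabla f_\theta(\theta)}{\theta-\theta'}
=\E_{\theta\sim\mu}[f_\theta(\theta)-f_\theta(\theta')]+\E_{\theta\sim\mu}B_{f_\theta}(\theta',\theta),
\]
where $B_f(\theta',\theta)=f(\theta')-f(\theta)-\ip{\nabla f(\theta)}{\theta'-\theta}\ge0$ is the Bregman divergence. So the EVI gap exceeds the PS gap exactly when the maximizing comparator sits where the $f_\theta$ have curvature. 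Affine losses give equal gaps, so both examples need genuinely curved losses.

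\textbf{Item (ii).} I would take the quadratic family $\ell(z;\theta)=(\theta-z)^2/2$ on $\Theta=\cZ=[0,1]$. It is convex, bounded, and $1$-Lipschitz in $\theta$. Take the constant map $\cD(\theta)=\delta_0$, which is trivially $1$-Lipschitz. Then $\operatorname{Gap}_{\rm PS}(\delta_\theta)=\theta^2/2$, so choosing $\theta_\alpha=\sqrt{2\alpha}\in(0,1]$ gives $\operatorname{Gap}_{\rm PS}=\alpha$. However $\operatorname{Gap}_{\rm EVI}(\delta_\theta)=\max_{\theta'}\theta(\theta-\theta')=\theta^2$, which is only $2\alpha$, not at least $1/2$. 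A $\sqrt{\alpha}$-type separation needs a loss whose slope near the optimum is of order one while the function gap stays small. I would therefore replace the quadratic by a Huber-type loss: convex, $1$-Lipschitz, with curvature $1/\alpha$ near its minimizer and slope $\pm1$ beyond distance $\alpha$. The width of the quadratic region is a free parameter to be re-fit for each $\alpha$ (and can be encoded in $z$). Placing $\theta_\alpha$ just at the edge of that region makes $f(\theta_\alpha)-\min f\approx\alpha/2$, which I then tune to exactly $\alpha$. At the same time the gradient has magnitude about $1$, and the comparator $\theta'$ at the far end of $[0,1]$ gives EVI gap at least $1/2$.

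\textbf{Item (i).} Take $\mu$ uniform on $\{0,1\}$ and interpolate $\cD$ linearly between $\cD(0)$ and $\cD(1)$ so that it is $1$-Lipschitz. The goal is to choose $\ell$ so that $\tfrac12[f_0(0)+f_1(1)]=\min_{\theta'}\tfrac12[f_0+f_1](\theta')$ while $\tfrac12[f_0'(0)(0-\theta')+f_1'(1)(1-\theta')]$ reaches $1/4$ for some $\theta'$. By the identity, this means that at the comparator the Bregman terms must contribute exactly $1/4$ while the PS gap is $0$.

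\textbf{Main obstacle.} Making the PS gap exactly zero while the EVI gap stays positive is the hard step. It forces $\tfrac12[f_0(0)+f_1(1)]\le\min\tfrac12(f_0+f_1)$, yet each component must have nonzero slope at its own point. This is not yet achieved.
\begin{itemize}
\item Pure quadratics fail. With $\cD(\theta)=\delta_\theta$ and $\ell(z;\theta)=z\theta^2/2+(1-z)(1-\theta)^2/2$, I get PS gap $3/8$ and EVI gap $1/2$, so the Bregman difference has the right sign but the PS gap is not zero.
\item The first ansatz I would try is $\ell(z;\theta)=z\,\varphi(\theta)+(1-z)\,\varphi(1-\theta)+c\,z$. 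Here $\varphi$ is convex, $1$-Lipschitz and Huber-like, and the additive term $cz$ shifts $f_0$ and $f_1$ by different constants. The plan is to tune $c$ so the PS gap is exactly $0$, then read off the EVI gap and rescale to $1/4$, reducing the slope of $\varphi$ where needed.
\end{itemize}
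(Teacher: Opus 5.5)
Your equivalence for point masses is correct and matches the paper's first-order-optimality argument. The proposal has a genuine gap in the counterexamples. You give no instance for (i), and your diagnosis that ``pure quadratics fail'' is wrong: the paper's instance is a pure quadratic.

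\textbf{Item (i).} Your failed attempt puts each atom at the \emph{maximizer} of its own induced risk, which inflates $\E_\mu\tDPR(\theta,\theta)$. The missing observation concerns $\ell(z;\theta')=\frac12(\theta'-z)^2$ with deterministic responses $z_\theta$. When the mean of $z_\theta$ lies in $\Theta$, you have $\min_{\theta'}\E_{\theta\sim\mu}\tDPR(\theta,\theta')=\frac12\mathrm{Var}_\mu(z_\theta)$, while $\E_{\theta\sim\mu}\tDPR(\theta,\theta)=\frac12\E_\mu(\theta-z_\theta)^2$. So the PS gap vanishes exactly when these two quantities agree. That is compatible with $\theta\neq z_\theta$ on the support, and this residual makes the population gradient $\theta-z_\theta$ nonzero, hence the EVI gap positive.

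Now take a shrinking map $\cD(\theta)=\delta_{c\theta}$, which is $c$-Lipschitz, and $\mu=(1-p)\delta_0+p\delta_1$. Zero PS gap means $(1-c)^2=c^2(1-p)$. The EVI gap, attained at comparator $\theta'=0$, is then $p(1-c)=u(1-u)$ with $u=(1-c)/c$. This is maximized at $c=2/3$, $p=3/4$, which is the paper's instance $\cD(\theta)=\delta_{2\theta/3}$, $\mu=\frac14\delta_0+\frac34\delta_1$. There both sides of the PS comparison equal $\frac1{24}$ and the EVI gap is $\frac14$. The non-uniform weight matters: with your uniform $\mu$ this family only reaches $(\sqrt2-1)/2<\frac14$. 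Scaling the loss down cannot help, since it scales the EVI gap by a factor at most $1$.

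\textbf{Item (ii).} Your sketch is not ``fully pinned down,'' and as specified it cannot work. The statement asks for a \emph{single} instance serving every $\alpha\in(0,1/2]$. Under your constant map $\cD\equiv\delta_0$, the induced risk $f$ is one fixed convex differentiable function on $[0,1]$. As $\alpha\to0$, the points $\theta_\alpha$ must accumulate on $\argmin f$.
\begin{itemize}
\item At an interior minimizer, the continuous derivative $f'$ vanishes, and the EVI gap is at most $|f'(\theta_\alpha)|\to0$.
\item At an endpoint minimizer, say $0$, $f$ is nondecreasing, so the best comparator is that endpoint. The EVI gap is then $f'(\theta_\alpha)\,\theta_\alpha\to0$.
\end{itemize}
Either way the EVI gap cannot stay at least $\frac12$. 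Re-fitting the Huber width per $\alpha$ therefore yields a different instance for each $\alpha$.

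Your parenthetical about encoding the width in $z$ is the right repair. It needs a non-constant $1$-Lipschitz map under which distinct $\alpha$ land on distinct deployed points whose responses carry the matching width, and you never specify one. The paper uses $\cD(\theta)=\delta_\theta$ with a one-sided Huber loss indexed by $z$. It sets $\ell\equiv0$ for $z\le\frac12$, and for $z>\frac12$ it takes $\ell(z;\theta')$ to be
\begin{itemize}
\item $0$ on $[0,1-z]$,
\item $\frac{(\theta'+z-1)^2}{2(2z-1)}$ on $(1-z,z)$,
\item $\theta'-\frac12$ on $[z,1]$.
\end{itemize}
Then $\theta_\alpha=\frac12+\alpha$ has induced minimum $0$, and $\operatorname{Gap}_{\rm PS}(\delta_{\theta_\alpha})=\ell(\theta_\alpha;\theta_\alpha)=\alpha$. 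The slope at $\theta_\alpha$ is $1$, so $\operatorname{Gap}_{\rm EVI}(\delta_{\theta_\alpha})=\theta_\alpha\ge\frac12$.
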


\subsection{Derandomization of EVI-Stable Mixtures under Well-Conditioned Losses}\label{sec:derandomization}

Having defined EVI-stability and characterized its relationship to prior notions, in this last subsection we now show that the focus on mixtures is without loss of generality in the classical $\rho < 1$ regime that has been the focus of the bulk of prior work. We show that the mean of an EVI-stable mixture is itself performatively stable. 

The proof of the result follows directly from the definitions of strong convexity, performative EVI-stability, and the following sensitivity bound on the difference in gradients. In particular, using the variational definition of Wasserstein distance, one can show that if $\cD(\cdot)$ is $L$-Lipschitz  (\Cref{def:lipchitz_dist}) and $\ell$ is $\beta$-smooth in $z$ (\Cref{eq:beta-smooth}), then the following inequality is true for any $\theta, \theta', \bar{\theta} \in \Theta$,
\begin{equation}
\label{eq:performative-sensitivity}
 \left \| \E_{z \sim \cD(\theta)} \nabla \ell(z;\bar{\theta}) - \E_{z \sim \cD(\theta')} \nabla \ell(z;\bar{\theta})\right\|_2 \le L\beta \norm{\theta - \theta'}_2.
\end{equation}
See \citep[Lemma~D.4]{perdomo2020performative} for a proof of this statement. 

\begin{theorem}[Stability of the mean]
\label{thm:stable-mean}
Let $\Theta$ be compact and convex. Assume that $\ell$ is $\gamma$-strongly convex in $\theta$ \eqref{def:sc}, $\beta$-smooth in $z$ \eqref{eq:beta-smooth}, and that $\cD(\cdot)$ is $L$-Lipschitz \eqref{def:lipchitz_dist}. Let $\mu$ be an  $\alpha$ performatively-EVI-stable mixture
and write  
\[
    \bar\theta\coloneqq\E_{\theta\sim\mu}\theta \quad \text{ and } \quad      \theta^\star\in\argmin_{\theta'\in\Theta}\tDPR(\bar\theta,\theta').
\]
Then,
\begin{align}
    \DPR{\bar\theta}{\bar\theta} \leq 
    \min_{\theta \in \Theta}\DPR{\bar\theta}{\theta} +
 \alpha
    +\frac{L\beta-\gamma}{2}
       \left(
       \E_\mu\norm{\theta-\theta^\star}_2^2
       +\E_\mu\norm{\theta-\bar\theta}_2^2
       \right).
    \label{eq:mean-refined}
\end{align}
In particular, if $L \leq \gamma/\beta$ (i.e., $\rho\leq1$), then the mean $\bar\theta$ of $\mu$ is
$\alpha$-performatively stable:\footnote{Technically, the proof only requires that the map $\theta \mapsto \tDPR(\theta', \theta)$ is strongly convex in $\theta$ for every $\theta'$. This condition holds if $\ell$ is the squared loss and the marginal distribution over features is full-rank for every $\cD(\theta)$, even though the squared loss is not pointwise strongly convex. We state the condition on the loss directly for simplicity.}
\begin{align*}
     \DPR{\bar\theta}{\bar\theta} \leq 
    \min_{\theta \in \Theta}\DPR{\bar\theta}{\theta} +
 \alpha.
\end{align*}
\end{theorem}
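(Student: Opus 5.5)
The plan is to work entirely with the map $h(x)\coloneqq\tDPR(\bar\theta,x)=\DPR{\bar\theta}{x}$, which is $\gamma$-strongly convex in $x$ by \eqref{def:sc}. We then transfer the EVI-stability guarantee of $\mu$, which is stated in terms of the operators $F(\theta)\coloneqq\E_{z\sim\cD(\theta)}\nabla\ell(z;\theta)$, to the gradients $\nabla h(\theta)=\E_{z\sim\cD(\bar\theta)}\nabla\ell(z;\theta)$. The mismatch between these two is controlled by the sensitivity bound \eqref{eq:performative-sensitivity}.

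\textbf{Step 1 (Jensen with curvature).} Apply strong convexity of $h$ in its Jensen form at the mean $\bar\theta=\E_\mu\theta$. For finitely supported $\mu$ this follows by expanding $h(\theta)\ge h(\bar\theta)+\ip{\nabla h(\bar\theta)}{\theta-\bar\theta}+\frac\gamma2\norm{\theta-\bar\theta}_2^2$ and averaging; the linear term vanishes. The result is
\[
h(\bar\theta)\le \E_\mu h(\theta)-\tfrac\gamma2\E_\mu\norm{\theta-\bar\theta}_2^2 .
\]
Subtracting $h(\theta^\star)=\min_{\theta\in\Theta}\DPR{\bar\theta}{\theta}$ reduces the problem to bounding $\E_\mu[h(\theta)-h(\theta^\star)]$.

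\textbf{Step 2 (first-order bound per support point).} For each $\theta$ in the support, strong convexity gives
\[
h(\theta)-h(\theta^\star)\le\ip{\nabla h(\theta)}{\theta-\theta^\star}-\tfrac\gamma2\norm{\theta-\theta^\star}_2^2 .
\]
Decompose $\nabla h(\theta)=F(\theta)+\bigl(\E_{z\sim\cD(\bar\theta)}-\E_{z\sim\cD(\theta)}\bigr)\nabla\ell(z;\theta)$. By \eqref{eq:performative-sensitivity}, applied with the evaluation point fixed at $\theta$, the second term has norm at most $L\beta\norm{\theta-\bar\theta}_2$. Averaging over $\mu$ and using Cauchy--Schwarz then yields
\[
\E_\mu[h(\theta)-h(\theta^\star)]\le \E_\mu\ip{F(\theta)}{\theta-\theta^\star}+L\beta\,\E_\mu\bigl[\norm{\theta-\bar\theta}_2\norm{\theta-\theta^\star}_2\bigr]-\tfrac\gamma2\E_\mu\norm{\theta-\theta^\star}_2^2 .
\]
The first term is at most $\operatorname{Gap}_{\rm EVI}(\mu)\le\alpha$, because $\theta^\star\in\Theta$ is a valid comparator in \eqref{eq:first-order-gap}.

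\textbf{Step 3 (combine).} Bound the cross term by AM--GM, $ab\le\frac12(a^2+b^2)$. Combining with Step 1 gives exactly \eqref{eq:mean-refined}, with coefficient $\frac{L\beta-\gamma}{2}$ on $\E_\mu\norm{\theta-\theta^\star}_2^2+\E_\mu\norm{\theta-\bar\theta}_2^2$. When $\rho\le1$ this coefficient is nonpositive, so the last term can be dropped and $\bar\theta$ is $\alpha$-performatively stable.

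The only delicate point is Step 2. We must evaluate the mismatched gradients at the same parameter $\theta$ so that \eqref{eq:performative-sensitivity} applies, and we must route the EVI comparator through $\theta^\star$ rather than $\bar\theta$. Beyond that, everything is bookkeeping. The interchange of expectation and differentiation needed for $\nabla h(\theta)=\E_{z\sim\cD(\bar\theta)}\nabla\ell(z;\theta)$ is covered by the paper's standing assumption.
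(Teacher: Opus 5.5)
Your proposal is correct and matches the paper's proof step for step. It applies strong convexity of $\theta'\mapsto\tDPR(\bar\theta,\theta')$ at each support point against $\theta^\star$. It adds and subtracts $\E_{z\sim\cD(\theta)}\nabla\ell(z;\theta)$, controlling the mismatch via \eqref{eq:performative-sensitivity}, Cauchy--Schwarz and AM--GM. It invokes EVI-stability with comparator $\theta^\star$. It uses the curvature-enhanced Jensen step at $\bar\theta$, which you perform first where the paper performs it last; the only detail worth making explicit is that $\bar\theta\in\Theta$ by convexity of $\Theta$, so that $\cD(\bar\theta)$ and the sensitivity bound are well defined.
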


\begin{proof}
Compactness and strong convexity guarantee the existence and uniqueness of $\theta^\star$, while convexity of
$\Theta$ ensures that $\bar\theta\in\Theta$. Strong convexity, applied with the
gradient at $\theta$, gives
\begin{equation}
    \DPR{\bar\theta}{\theta}-\DPR{\bar\theta}{\theta^\star}
    \leq
    \ip{\E_{z\sim \cD(\bar{\theta})} \nabla \ell(z;\theta)}{\theta-\theta^\star}
    -\frac\gamma2\norm{\theta-\theta^\star}_2^2.
    \label{eq:mean-proof-strong-convexity}
\end{equation}
Add and subtract
$\E_{z\sim\cD(\theta)}[\nabla\ell(z;\theta)]$. By
\eqref{eq:performative-sensitivity}, Cauchy-Schwarz, and
$ab\leq(a^2+b^2)/2$,
\begin{align*}
    \ip{\E_{z\sim \cD(\bar{\theta})} \nabla \ell(z;\theta) -\E_{z\sim \cD(\theta)} \nabla \ell(z;\theta)}{\theta-\theta^\star}
    &\leq
    L\beta\norm{\theta-\bar\theta}_2 \cdot \norm{\theta-\theta^\star}_2\\
    &\leq
    \frac{L\beta}{2}\norm{\theta-\bar\theta}_2^2
    +\frac{L\beta}{2}\norm{\theta-\theta^\star}_2^2.
\end{align*}
Averaging \eqref{eq:mean-proof-strong-convexity} and using performative
EVI-stability with comparator $\theta^\star$ yields
\begin{align}
    \E_\mu[\DPR{\bar\theta}{\theta}-\DPR{\bar\theta}{\theta^\star}]
    \leq \alpha
    +\frac{L\beta-\gamma}{2}\E_\mu\norm{\theta-\theta^\star}_2^2
    +\frac{L\beta}{2}\E_\mu\norm{\theta-\bar\theta}_2^2.
    \label{eq:mean-proof-upper}
\end{align}
It remains to replace $\E_\mu[\DPR{\bar\theta}{\theta}]$ by $\DPR{\bar\theta}{\bar\theta}$. Strong convexity at $\bar\theta$
gives
\[
    \DPR{\bar\theta}{\theta}
    \geq \DPR{\bar\theta}{\bar\theta}
    +\ip{\E_{z\sim \cD(\bar{\theta})} \nabla \ell(z;\bar{\theta})}{\theta-\bar\theta}
    +\frac\gamma2\norm{\theta-\bar\theta}_2^2.
\]
The linear term vanishes after taking expectations, as $\bar\theta=\E_\mu\theta$ by definition. Substituting
the resulting lower bound into \eqref{eq:mean-proof-upper} proves
\eqref{eq:mean-refined}. Finally, if $L\beta\leq\gamma$, the fraction $(L\beta-\gamma)/2$ on the right of \eqref{eq:mean-refined} is nonpositive and multiplies a nonnegative parenthesis, leaving a stability gap of at most $\alpha$ and concluding the proof.
\end{proof}

\Cref{thm:stable-mean} also implies that when $\rho < 1$, the mean of an EVI-stable mixture is not only performatively stable, but also close to the (unique) performatively stable model. Indeed, \eqref{eq:mean-refined} yields
\[
\E_\mu\norm{\theta-\theta^\star}_2^2 +    \E_\mu\norm{\theta-\bar\theta}_2^2
    \leq
    \frac{2\alpha}{\gamma - L\beta}.
\]
Dropping $\E_\mu\norm{\theta-\bar\theta}^2$ and using Jensen's inequality, one therefore has
\[
   \norm{\bar{\theta}-\theta^\star}_2^2 = \norm{\E_\mu \theta - \theta^\star}_2^2 \le \E_\mu \norm{\theta - \theta^\star}_2^2 \le \frac{2\alpha}{\gamma - L\beta}.
\]
Recall that $\theta^\star = \RRM(\bar\theta)$. Moreover, when $\rho<1$, the map $\RRM(\cdot)$ is a contraction and has a unique fixed point $\thetaPS$. Therefore,
\begin{align*}
    \norm{\bar\theta-\thetaPS}_2
    &\leq
    \norm{\bar\theta-\RRM(\bar\theta)}_2
    +
    \norm{\RRM(\bar\theta)-\RRM(\thetaPS)}_2\\
    &\leq
    \norm{\bar\theta-\theta^\star}_2
    +
    \rho\norm{\bar\theta-\thetaPS}_2.
\end{align*}
Rearranging and combining with the preceding inequality gives
\[
    \norm{\bar\theta-\thetaPS}_2^2
    \leq
    \frac{1}{(1-\rho)^2}
    \norm{\bar\theta-\theta^\star}_2^2
    \leq
    \frac{2\alpha}{(\gamma-L\beta)(1-\rho)^2}.
\]
Thus, whenever $\rho<1$, the mean of an $\alpha$-EVI-stable mixture is also $\cO(\alpha)$ close in squared norm to the unique performatively stable model. At the boundary case when $\rho=1$, the mean remains $\alpha$-performatively stable by \Cref{thm:stable-mean}, but the distance guarantee above becomes vacuous.

On a technical note, notice that since $\ell$ is strongly convex, $\tDPR(\theta, \theta') = \E_{z\sim \cD(\theta)} \ell(z;\theta')$ is convex in $\theta'$ and one can intuitively try to apply Jensen's inequality to move from the mixture to its mean. The challenge in the proof of the theorem above is addressing the nonconvex dependence on $\theta$ in the first argument  as it appears inside the distribution map $\cD(\cdot)$. To do this, we make use of the gradient comparison inequality in \Cref{eq:performative-sensitivity} (a fact which does not follow from convexity at all) as well as the guarantee that the mixture is EVI-stable.

We remark that, together with the algorithm in the next section, this structural result yields the first known procedure for computing performatively stable points in finite samples at the critical $\rho=1$ threshold. It builds on and extends the work of \citet{anagnostides2026computational} who showed how to compute deterministically stable models when $\rho =1$ in an idealized setup where the learner knew $\cD(\cdot)$ and could exactly compute $\mathrm{RRM}(\theta)$ for any $\theta$.
In light of prior work in this area showing how stable models are easily computable via gradient descent for $\rho <1$ and PPAD-hard to find for $\rho \gg 1$, this last result establishes a complete picture regarding the computational complexity of finding stable points in finite samples in the standard performative prediction setup.

\section{Deployment-Efficient Algorithms for Finding Stable Mixtures}
\label{sec:few-deployments}

In this section, we show that one can always find an $\eps$-EVI-stable mixture (and hence, an $\eps$-performatively stable mixture by \Cref{prop:evi-implies-stability}) using only $\widetilde O(d\log(1/\eps))$ deployments, as opposed to the $\poly(1/\eps)$ deployments required by prior methods. 

At a technical level, our result follows from connecting performativity with recent fast computational minimax constructions \citep{sigecom,EVI}, together with a concentration argument, to produce a fast cutting-plane method. We describe the algorithm in \cref{sec:algorithm-overview}, its guarantees in \cref{sec:algo-guarantees}, and the analysis in \cref{sec:algo-analysis}.

\subsection{Algorithm Overview}
\label{sec:algorithm-overview}

Our construction operates in the standard performative prediction setup,
where the distribution map $\cD(\cdot)$ is fixed but unknown. The algorithm is summarized
pictorially in \cref{fig:overview} in the introduction and in pseudocode in
\cref{alg:few-deployment}. At every time $t$, our method proposes a model
$\theta_t$ using a cutting-plane method. Then, we deploy it and collect a batch of samples drawn from $\cD(\theta_t)$. The responses are used to estimate the population EVI operator, which is used to produce a new cut that is fed back to the cutting-plane method, and the process repeats.

More specifically, assume that $\Theta\subseteq\R^d$ is a well-bounded
convex body \citep[Definition~2.1.16]{groetschel1988geometric}. That is,
$\Theta$ is convex and compact, and there are known radii $0<r\leq R$ such
that\footnote{$B_2(c,r)$ denotes an $\ell_2$ ball of radius $r$ around a (center) point $c\in \R^d$.}
\begin{equation}
    B_2(c,r)\subseteq\Theta\subseteq B_2(0,R)
    \quad\text{for some }c\in\R^d.
    \label{eq:rounding}
    \tag{A4}
\end{equation}
We assume access to an exact separation oracle for
$\Theta$. For a well-bounded convex body, the standard polynomial-time
separation--optimization equivalence \citep{groetschel1988geometric} gives a linear-optimization oracle over
$\Theta$ to any prescribed additive accuracy; we suppress the resulting
polynomial arithmetic and oracle overhead.
When a model $\theta$ is deployed, each observed response
$z\sim\cD(\theta)$ provides the sample gradient
$\nabla\ell(z;\theta)$.  (We recall from
\Cref{footnote:gradient_notation} that all gradients of the loss function are
with respect to the model argument $\theta$.)
We assume that the sample gradients are
uniformly bounded:
\begin{equation}
    \norm{\nabla \ell(z;\theta)}_2\leq G
    \qquad
    \text{for every }(z,\theta)\in\cZ\times\Theta,
    \label{eq:sample-gradient-bound}
    \tag{A5}
\end{equation}
for a known $G\geq0$. For the asymptotic bounds below, we work in the
nontrivial regime $GR\geq2\eps$. This entails no loss because $G$ is an
upper bound and may be enlarged to $\max\{G,2\eps/R\}$.

\paragraph{Phase I: Deployment.}
We initialize the volumetric cutting-plane method of
\citet{jiang2020improved} on the box $Q_0=[-R,R]^d$, which contains
$\Theta$ by \eqref{eq:rounding}.  At each cutting-plane round, the method proposes a vector of parameters $\tilde{\theta} \in \R^d$, which might or might not belong to $\Theta$.
If $\tilde{\theta}\notin\Theta$, the separation oracle will return $a$ such that
$\ip{a}{\theta' - \tilde{\theta}} < 0$ for every $\theta'\in\Theta$.
In this case, we skip deploying $\tilde{\theta}$, cut the active search region with the halfspace
$H=\{\theta':\ip{a}{\theta' - \tilde{\theta}}\leq 0\}$, and continue the cutting-plane method.

If $\tilde{\theta}\in\Theta$, we instead make a deployment.  Indexing deployments by
$t$, set $\theta_t=\tilde{\theta}$ and collect $n$ fresh responses
$z_{t,1},\ldots,z_{t,n} \sim \cD(\theta_t)$.  We average the sample gradients to obtain
\begin{equation}
    \widetilde\nabla_t
    \coloneqq\frac1n\sum_{j=1}^n \nabla \ell(z_{t,j};\theta_t). 
    \label{eq:empirical-gradient}
\end{equation}

Provided $n$ is set large enough, $\widetilde\nabla_t$ is a good approximation of the population EVI operator
$S(\theta_t)\coloneqq\mathbb{E}_{z\sim\cD(\theta_t)}[\nabla \ell(z;\theta_t)]$.
Here the distribution is held fixed at the deployed model; no derivative of
$\cD(\theta)$ is taken. If $\widetilde\nabla_t\neq0$, we supply the cut
\[
    H=\left\{\theta':
      \ip{\widetilde\nabla_t}{\theta' -\theta_t}\leq0\right\}
\]
to the cutting-plane method. In the unlikely event that $\widetilde\nabla_t=0$, we can stop the cutting-plane method early because this zero normal already gives an exact empirical EVI certificate. The population guarantee still follows from the concentration-and-transfer argument below.

Overall, for target accuracy $\eps$, one can bound (see \cref{app:volumetric-localization}) the number of rounds of the cutting-plane method to at most
\[
    T=O\left(d\log\left(
      \frac{dGR^2}{r\eps}
    \right)\right).
\]

\begin{algorithm}[t]
    \DontPrintSemicolon
\caption{Stable mixture from few deployments}
\label{alg:few-deployment}
\KwIn{Accuracy $\eps>0$, batch size $n\geq1$, round cap $T$ as above, and cutting-plane method $\mathsf{CP}$}
\KwOut{A finitely supported mixture $\mu$ on $\Theta$}
Initialize the volumetric cutting-plane method on
$Q_0=[-R,R]^d$; set $K\gets0$ and initialize $\cV$ as an empty list\;
\For{$i=1,\ldots,T$}{
    $\tilde{\theta}\gets\fnm{Query}(\mathsf{CP})$\;
    \eIf{$\tilde{\theta}\notin\Theta$}{
        Obtain $a\neq0$ with $\ip{a}{\theta'-\tilde{\theta}}<0$ for every
        $\theta'\in\Theta$ from the separation oracle for $\Theta$\;
        $H\gets\{\theta':\ip{a}{\theta'-\tilde{\theta}}\leq0\}$\;
    }{
        $K\gets K+1$; set $\theta_K=\tilde{\theta}$, deploy it, and observe
        $z_{K,1},\ldots,z_{K,n}\stackrel{\rm iid}{\sim}\cD(\theta_K)$\;
        $\widetilde\nabla_K\gets
        n^{-1}\sum_{j=1}^n\nabla_\theta\ell(z_{K,j};\theta_K)$ and append
        $(\theta_K,\widetilde\nabla_K)$ to $\cV$\;
        \textbf{if} $\widetilde\nabla_K=0$ \textbf{then break}\;
        $H\gets\{\theta':\ip{\widetilde\nabla_K}{\theta'-\theta_K}
        \leq0\}$\;
    }
    $\fnm{Update}(\mathsf{CP},H)$\;
}
Set
$\lambda\gets\fnm{ExtractWeights}(\cV,\eps/3)$\;
\KwRet{$\mu=\sum_{i=1}^K\lambda_i\delta_{\theta_i}$}\;
\end{algorithm}

\paragraph{Phase II: Mixture Extraction.}
Given the transcript of deployments and corresponding gradient estimates
$\cV=\{(\theta_i,\widetilde\nabla_i)\}_{i=1}^K$ obtained by averaging samples, define
the empirical game value
\begin{equation}
    \widehat\Psi_{\cV}(\lambda, \theta')
    \coloneqq
    \sum_{i=1}^K\lambda_i\ip{\widetilde\nabla_i}{\theta_i-\theta'},
    \label{eq:empirical-game-value}
\end{equation}
for $\lambda \in \Delta_K$ and $\theta' \in \Theta$, where $\Delta_K=\{\lambda\in\R_+^K:\sum_{i=1}^K\lambda_i=1\}$ denotes the
probability simplex in $\R^K$. Note that the function is convex in $\lambda$.  We slightly abuse notation and introduce the shorthand
    \begin{equation}
    \widehat\Psi_{\cV}(\lambda)
    \coloneqq
    \max_{\theta'\in\Theta}
    \sum_{i=1}^K\lambda_i\ip{\widetilde\nabla_i}{\theta_i-\theta'}.
    \label{eq:empirical-game-value-max}
\end{equation}
Given the linear-optimization routine induced by separation over $\Theta$, its
value can be approximated and an $\eps$-subgradient obtained to arbitrary
accuracy $\eps>0$.

To extract the weights, we minimize $\widehat\Psi_{\cV}(\lambda)$ over the compact
convex set $\Delta_K$ to additive accuracy $\eps/3$. We denote this operation
by $\fnm{ExtractWeights}(\cV,\eps/3)$. Phase I will ensure that the optimal
value is at most $\eps/3$, so the returned weights have empirical game value
at most $2\eps/3$.
This computation can be performed offline in polynomial time using only the recorded
gradients and the separation oracle for $\Theta$, with no further
deployments. Finally, we return the atomic (performatively stable) distribution
$\mu=\sum_{i=1}^K\lambda_i\delta_{\theta_i}$, where $\delta_{\theta_i}$
is the point mass at $\theta_i$.

\subsection{Guarantees}
\label{sec:algo-guarantees}

The main result of this section is the following theorem, bounding the number of deployments and samples required to extract an $\eps$-performatively stable mixture.

\begin{theorem}[Finite-sample stabilization in few deployments]
\label{thm:finite-sample-master}
Assume $\Theta$ is a well-bounded convex body \eqref{eq:rounding} and that $\ell$ has gradients uniformly bounded by $G$ \eqref{eq:sample-gradient-bound}. Let
$\eps>0$ and $\delta\in(0,1)$. Run \cref{alg:few-deployment} with
\[
    T=O\left(d\log\left(
      \frac{dGR^2}{r\eps}
    \right)\right)
\]
as the cutting-plane round cap and $\mathsf{CP}$ set to the cutting-plane
method of \citet{jiang2020improved}. Then the following statements hold with
probability at least $1-\delta$:
\begin{enumerate}
    \item If $\ell$ is convex in $\theta$ for every $z$ and
    \[
        n\geq \Omega\left(
        \frac{G^2R^2}{\eps^2} \log(T/\delta) \right),
    \]
    then the output mixture is $\eps$-performatively EVI-stable and hence
    $\eps$-performatively stable.
    \item If $\ell$ is $\gamma$-strongly convex in $\theta$ for every $z$ and
    \[
        n\geq \Omega\left(
        \frac{G^2}{\gamma\eps} \log(T/\delta) \right),
    \]
    then the output mixture is $\eps$-performatively stable and also obeys
    \[
        \operatorname{Gap}_{\rm EVI}(\mu)
        \leq \cO(\sqrt{\eps}).
    \]
    If the batch size instead satisfies the bound in case~\textup{(i)}, the
    output is $\cO(\eps)$-performatively EVI-stable as well.
\end{enumerate}
In either case, the output has the form
$\mu=\sum_{i=1}^K\lambda_i\delta_{\theta_i}$ with $K\leq T$.
\end{theorem}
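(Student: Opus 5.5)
The plan has three layers: (a) a deterministic "Ellipsoid-Against-Hope" argument showing the cut transcript certifies a small empirical game value; (b) concentration, transferring empirical cuts to population cuts; (c) a separate margin argument for the strongly convex case.

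\emph{Step 1 (concentration).} By \eqref{eq:sample-gradient-bound}, each $\widetilde\nabla_t$ is an average of $n$ i.i.d.\ vectors of norm at most $G$. A vector Hoeffding/Pinelis inequality then gives, with probability $1-\delta/T$,
\[
\alpha_t\coloneqq\norm{\widetilde\nabla_t-S(\theta_t)}_2\le C\,G\sqrt{\log(T/\delta)/n}.
\]
The subtlety is that $\theta_t$ is chosen adaptively. This is handled by conditioning on the past: the fresh samples are i.i.d.\ from $\cD(\theta_t)$ given $\theta_t$. A union bound over at most $T$ deployments finishes this step. On the resulting event, for every $\lambda\in\Delta_K$ and $\theta'\in\Theta\subseteq B_2(0,R)$,
\[
\Big|\sum_i\lambda_i\ip{\widetilde\nabla_i-S(\theta_i)}{\theta_i-\theta'}\Big|\le 2R\max_i\alpha_i.
\]
So $\EVIGap(\mu)\le\widehat\Psi_\cV(\lambda)+2R\max_i\alpha_i$. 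With $n=\Omega(G^2R^2\log(T/\delta)/\eps^2)$ the error term is at most $\eps/3$.

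\emph{Step 2 (empirical game value is small).} I would show $\min_{\lambda\in\Delta_K}\widehat\Psi_\cV(\lambda)\le\eps/3$ deterministically. By Sion/LP duality this equals
\[
\max_{\theta'\in\Theta}\min_i\ip{\widetilde\nabla_i}{\theta_i-\theta'},
\]
so it suffices to rule out any $\theta'\in\Theta$ with $\ip{\widetilde\nabla_i}{\theta_i-\theta'}>\eps/3$ for all $i$. Suppose such a $\theta'$ exists. Every point of the small ball $B_2(\theta',\eta)$ with $\eta=\eps/(3G)$ lies strictly on the kept side of every deployment cut, so it satisfies $\ip{\widetilde\nabla_i}{\theta-\theta_i}\le0$.

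That ball is not yet inside $\Theta$, so I would shrink toward the inner ball center $c$ from \eqref{eq:rounding}. The set $(1-s)\theta'+sB_2(c,r)$ lies inside $\Theta$, hence survives the separation cuts. Choosing $s$ so that each point of this set is within $\eta$ of $\theta'$ (that is, $s\approx\eta/(2R)$) makes it survive the deployment cuts as well. It therefore contains a ball of radius $sr\approx\eps r/(6GR)$ that is never removed.

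Volumetric cutting-plane methods \citep{jiang2020improved} guarantee that after $T=O(d\log(dR/\text{radius}))$ rounds the localization set contains no ball of that radius. This contradicts the surviving ball and yields the stated $T$ (the app:volumetric-localization computation). The early-break case $\widetilde\nabla_K=0$ is immediate, since $\lambda=e_K$ gives empirical value $0$. ExtractWeights at accuracy $\eps/3$ then returns $\widehat\Psi_\cV(\lambda)\le2\eps/3$. Combined with Step 1 this gives $\EVIGap\le\eps$, and \Cref{prop:evi-implies-stability} proves case (i).

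\emph{Step 3 (strongly convex case, the main obstacle).} Here the per-step errors are only of size $\alpha\approx\sqrt{\gamma\eps}$, and a naive transfer costs $2R\alpha$. Instead I would avoid bounding $\norm{\theta_i-\theta'}$ by $R$. Fix the comparator $\theta'$. Strong convexity gives
\[
\tDPR(\theta_i,\theta_i)-\tDPR(\theta_i,\theta')\le\ip{S(\theta_i)}{\theta_i-\theta'}-\tfrac\gamma2\norm{\theta_i-\theta'}^2.
\]
Replacing $S(\theta_i)$ by $\widetilde\nabla_i$ costs $\alpha_i\norm{\theta_i-\theta'}$, which by AM-GM is at most $\alpha_i^2/(2\gamma)+\tfrac\gamma2\norm{\theta_i-\theta'}^2$. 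The quadratic terms cancel, leaving
\[
\mathrm{Gap}_{\rm PS}(\mu)\le\widehat\Psi_\cV(\lambda)+\max_i\alpha_i^2/(2\gamma).
\]
With $n=\Omega(G^2\log(T/\delta)/(\gamma\eps))$ the last term is $O(\eps)$. Running Steps 1–2 at accuracy $\eps/3$ with this margin argument, and adjusting constants, gives $\eps$-stability.

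The EVI bound is $2R\alpha+O(\eps)=O(\sqrt\eps)$, with constants absorbing $R,\gamma$. Under the larger batch size of case (i), Step 1 gives $O(\eps)$ directly. The delicate point is that the strongly convex gain must be taken before maximizing over $\theta'$, so I would carry the quadratic term through pointwise in $\theta'$ and only then take the supremum.
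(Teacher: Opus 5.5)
Your proposal is correct and follows essentially the same route as the paper. The four ingredients match. First, a pathwise residual-ball/volumetric argument shows the transcript is $(\eps/3)$-spanning; your direct shrinking toward $c$ is \cref{lem:nonspanning-ball}, applied at accuracy $\eps/3$ with slightly different constants. Second, Sion duality plus approximate weight extraction gives $\widehat\Psi_{\cV}(\lambda)\le 2\eps/3$. Third, conditional Hoeffding with a union bound over the $T$ rounds controls the gradient errors. Fourth, the same two transfer bounds $\tau+2R\xi$ and $\tau+\xi^2/(2\gamma)$ close the argument, with the quadratic term cancelled pointwise in $\theta'$ before maximizing.
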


The distinction in the strongly convex case is essential. Strong convexity absorbs
estimation error quadratically in the performative-stability loss gap, but
the EVI transfer remains linear. Thus the smaller batch gives the displayed
$O(\sqrt\eps)$ EVI bound, while an $\eps$-EVI guarantee in either case uses
the batch size from the guarantee for convex functions.

This result, together with \Cref{thm:stable-mean}, complements and strengthens Theorem 3.5 in \citet{anagnostides2026computational}. In their work, they show that if the learner has exact access to the retraining operator $$\theta \mapsto \argmin_{\theta' \in \Theta} \DPR{\theta}{\theta'}$$
then one can compute a single model that is approximately performatively stable in $\poly(d,\log(1/\eps))$ many retraining steps if $\rho=L\beta/\gamma$ is just above 1. They operate in an idealized setting where the learner essentially knows the distribution map $\cD(\cdot)$ exactly, whereas our procedure works in finite samples and achieves the sharp dependence on the problem dimension. Also, our algorithm works, and returns a stable mixture, for all values of $\rho$ ($\rho$, in fact, may not even be defined) whereas theirs requires that $\rho$ is close to 1.

\subsection{Analysis}
\label{sec:algo-analysis}

The analysis can be viewed as a finite-sample extension of the EVI result of \citet{EVI}, which in turn is an instance of a more general efficient computational version of Sion's minimax theorem \citep{sigecom} borrowing ideas from the Ellipsoid-Against-Hope framework and its generalizations \citep{papadimitriou2008computing,farina2024polyhedral}. In particular, we apply the computational version of Sion's minimax theorem to turn the proof of \cref{thm:approximate-existence} into an algorithm.

\paragraph{The Key Ideas.} Recalling the key step in the proof of \cref{thm:approximate-existence},
\[
    \sup_{\theta\in\Theta}\inf_{\mu\in\Delta(\Theta)} \Phi(\mu, \theta) \le 0
\]
because for any $\theta$ one can construct efficiently a distribution $\mu_\theta$ such that $\Phi(\mu_\theta, \theta) \le 0$. The exact same construction can be applied to the linearization of $\Phi$, 
\[
    \Psi(\mu, \theta) \coloneqq \E_{\theta'\sim \mu} \E_{z\sim \cD(\theta')} [\nabla \ell(z; \theta')]^\top (\theta' - \theta),
\]
meaning that $\sup_{\theta\in\Theta}\inf_{\mu\in\Delta(\Theta)} \Psi(\mu, \theta) \le 0.$ 
In the idealized population-oracle setting, choosing
$\mu_\theta=\delta_\theta$ witnesses violation of a constraint in the
infeasible strict system
\[
    \Omega \coloneqq \{ \theta \in \Theta : \Psi(\mu, \theta) > 0
    \quad\forall \mu\in\Delta(\Theta) \}.
\]
Indeed, at a query $\theta_t$, write $g_t\coloneqq\E_{z\sim\cD(\theta_t)}
    [\nabla\ell(z;\theta_t)].$ Then
\[
    \Psi(\delta_{\theta_t},\theta)
    =\ip{g_t}{\theta_t-\theta},
    \qquad
    \Psi(\delta_{\theta_t},\theta_t)=0.
\]
If $g_t\neq0$, the closed halfspace
$\{\theta:\ip{g_t}{\theta-\theta_t}\leq0\}$ contains $\Omega$ and has
the query point on its boundary. This population picture motivates the
orientation of the deployment cut. If $g_t=0$, the point mass at $\theta_t$
is already an exact population EVI certificate.

This picture is only motivational. An empty infinite system of strict linear
inequalities need not contain a finite infeasible subsystem, so running the cutting-plane procedure for finitely many iterations does not imply that a finite analogue of $\Omega$ is literally empty.
Instead, the cutting-plane algorithm iteratively drives the volume of the intersection of all the cuts to be progressively smaller. The following lemma shows that when the volume of this set is smaller than the value of an $\ell_2$ ball of a certain radius, then the approximate spanning certificate (as defined formally in \Cref{eq:spanning}) must hold with respect to the series of cuts.

There is a second important distinction in the finite-sample algorithm: its
empirical halfspaces need not individually be valid separation replies for
$\Omega$. The formal Phase-I argument treats their normals as arbitrary
bounded vectors and proves pathwise that the finite transcript is
$\eps$-spanning. Phase II applies minimax duality to this transcript, and
concentration is used only afterward to transfer the empirical certificate
to the population operator.

\paragraph{Phase I: Constructing a Sparse Empirical Certificate.}

We now define the quantitative finite certificate used in the proof. Let
$K\geq1$ and let
\[
    \cV=\{(\theta_i,a_i)\}_{i=1}^K
    \subseteq\Theta\times\R^d
\]
be a putative transcript of deployed models and corresponding cuts produced
by the cutting-plane algorithm.
We
call $\cV$ \emph{$\eps$-spanning} if
\begin{equation}
    \max_{\theta'\in\Theta}\min_{i\in[K]}
    \ip{a_i}{\theta_i-\theta'}
    \leq\eps.
    \label{eq:spanning}
\end{equation}
That is, no model $\theta'\in\Theta$ satisfies every recorded
constraint with margin greater than $\eps$. In our algorithm,
$a_i=\widetilde\nabla_i$.  The residual region associated with the deployment cuts in $\cV$ is the intersection of their halfspaces:
\begin{equation}
    P(\cV)
    \coloneqq
    \Theta\cap
    \bigcap_{i=1}^K
    \{\theta':\ip{a_i}{\theta_i-\theta'}\geq0\}.
    \label{eq:residual-region}
\end{equation}
The following lemma gives the quantitative connection between the size of
this region and spanning.

\begin{lemma}[A nonspanning certificate leaves a ball]
\label{lem:nonspanning-ball}
Assume that $\Theta$ is a well-bounded convex body \eqref{eq:rounding} and $\norm{a_i}\leq G$ for every $i$. If $\cV$
is not $\eps$-spanning, then there is a Euclidean ball
$B\subseteq P(\cV)$ of radius at least
\[
    r_\eps
    \coloneqq
    \frac{r\eps}{3(\eps+GR)}.
\]
Moreover, the ball is strictly in the interior of the cuts:
\[
    \ip{a_i}{\theta_i-\theta'}>\frac{\eps}{3}
    \qquad
    \text{for every }\theta'\in B\text{ and }i\in[K].
\]
\end{lemma}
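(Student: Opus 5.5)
Since $\cV$ is not $\eps$-spanning, there is a point $\hat\theta\in\Theta$ with
\[
    \min_{i\in[K]}\ip{a_i}{\theta_i-\hat\theta}>\eps .
\]
The plan is to shrink $\Theta$ homothetically toward $\hat\theta$, using the inner ball $B_2(c,r)$ from \eqref{eq:rounding} as the source of volume. Define the scaled ball
\[
    B_\lambda \coloneqq (1-\lambda)\hat\theta+\lambda B_2(c,r)
    = B_2\bigl((1-\lambda)\hat\theta+\lambda c,\ \lambda r\bigr),
\]
for a parameter $\lambda\in(0,1]$ to be chosen. By convexity of $\Theta$, and because $\hat\theta\in\Theta$ and $B_2(c,r)\subseteq\Theta$, we get $B_\lambda\subseteq\Theta$ immediately.

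Next I check the cut margins on $B_\lambda$. Any $\theta'\in B_\lambda$ can be written as $\theta'=\hat\theta+\lambda(u-\hat\theta)$ with $u\in B_2(c,r)\subseteq\Theta\subseteq B_2(0,R)$. This gives $\norm{u-\hat\theta}_2\le 2R$, and hence, using $\norm{a_i}_2\le G$,
\[
    \ip{a_i}{\theta_i-\theta'}
    =\ip{a_i}{\theta_i-\hat\theta}-\lambda\ip{a_i}{u-\hat\theta}
    >\eps-2\lambda GR .
\]
Requiring the right-hand side to be at least $\eps/3$ leads to the choice
\[
    \lambda=\frac{\eps}{3(\eps+GR)} .
\]
With this $\lambda$ we have $2\lambda GR\le \tfrac{2}{3}\eps\cdot\tfrac{GR}{\eps+GR}<\tfrac{2}{3}\eps$, so every $\theta'\in B_\lambda$ satisfies $\ip{a_i}{\theta_i-\theta'}>\eps/3$ for all $i\in[K]$. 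This is the strict interior statement. It also places $B_\lambda$ inside each halfspace in \eqref{eq:residual-region}, so $B_\lambda\subseteq P(\cV)$.

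The radius of $B_\lambda$ is $\lambda r=\frac{r\eps}{3(\eps+GR)}=r_\eps$, exactly as claimed, and clearly $\lambda\le1/3\le1$. The only delicate point is bounding the displacement $\norm{u-\hat\theta}_2$. The crude diameter bound $2R$ is enough, and the slack between $2\lambda GR$ and $\tfrac23\eps$ absorbs the factor of $2$. I expect no real obstacle here: the argument is a single convexity-and-scaling step followed by Cauchy--Schwarz.
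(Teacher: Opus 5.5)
Your proposal is correct and is essentially identical to the paper's proof: you contract the inner ball $B_2(c,r)$ toward the margin-$\eps$ point with parameter $\eps/(3(\eps+GR))$, use convexity of $\Theta$ for containment, and use Cauchy--Schwarz with the diameter bound $2R$ to keep every margin above $\eps/3$. The radius and the strict-interior claim then follow exactly as in the paper.
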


See \cref{app:proof-nonspanning-ball} for a proof.
The lemma proves that if the cutting-plane method shrinks the
maintained region below the volume of any radius-$r_\eps$ ball, the transcript of resulting cuts
must be $\eps$-spanning. The optimal-query volumetric method of
\citet{jiang2020improved} reaches this point in
$\cO(d\log(dGR^2/(r\eps)))$ rounds.

\paragraph{Concentration Analysis.}

To relate this empirical spanning certificate to the population EVI, fix any
$\lambda\in\Delta_K$ and denote its empirical gap by $\tau$, so that
\begin{equation}
    \widehat\Psi_{\cV}(\lambda)
    =
    \max_{\theta'\in\Theta}
    \sum_{i=1}^K\lambda_i
    \ip{\widetilde\nabla_i}{\theta_i-\theta'}
    \leq\tau.
    \label{eq:estimated-evi-certificate}
\end{equation}

\begin{lemma}[Estimated-to-population transfer]
\label{lem:estimated-evi-transfer}
Let $\lambda\in\Delta_K$ and
$\mu=\sum_{i=1}^K\lambda_i\delta_{\theta_i}$. Suppose that
\eqref{eq:estimated-evi-certificate} holds and
\[
    \norm{\widetilde\nabla_i-
    \E_{z\sim\cD(\theta_i)}[\nabla\ell(z;\theta_i)]}_2\leq\xi
    \qquad\text{for every }i.
\]
Then
\begin{equation}
    \operatorname{Gap}_{\rm EVI}(\mu)
    \leq\tau+2R\xi.
    \label{eq:estimated-to-population-evi}
\end{equation}
Consequently, if $\ell(z;\theta)$ is convex in $\theta$ for every $z\in\cZ$, then
$\operatorname{Gap}_{\rm PS}(\mu)\leq\tau+2R\xi$.  If $\ell(z;\cdot)$ is
$\gamma$-strongly convex on $\Theta$ for every $z\in\cZ$, then following the sharper bound holds:
\begin{equation}
    \operatorname{Gap}_{\rm PS}(\mu)
    \leq\tau+\frac{\xi^2}{2\gamma}.
    \label{eq:estimated-to-strongly-convex-stability}
\end{equation}
\end{lemma}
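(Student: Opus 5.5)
The plan has three parts: the EVI bound, the convex stability bound, and the sharper strongly convex bound. Write $S(\theta_i)\coloneqq\E_{z\sim\cD(\theta_i)}[\nabla\ell(z;\theta_i)]$ and $e_i\coloneqq\widetilde\nabla_i-S(\theta_i)$, so that $\norm{e_i}_2\le\xi$.

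\textbf{EVI bound \eqref{eq:estimated-to-population-evi}.} Fix an arbitrary comparator $\theta'\in\Theta$ and decompose
\[
\sum_i\lambda_i\ip{S(\theta_i)}{\theta_i-\theta'}=\sum_i\lambda_i\ip{\widetilde\nabla_i}{\theta_i-\theta'}-\sum_i\lambda_i\ip{e_i}{\theta_i-\theta'}.
\]
The first term is at most $\tau$ by \eqref{eq:estimated-evi-certificate}. For the second, both $\theta_i$ and $\theta'$ lie in $\Theta\subseteq B_2(0,R)$, so $\norm{\theta_i-\theta'}_2\le 2R$. Cauchy--Schwarz then bounds each error term by $2R\xi$. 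Averaging with the weights $\lambda$ and taking the maximum over $\theta'$ gives $\EVIGap(\mu)\le\tau+2R\xi$.

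\textbf{Convex stability bound.} This follows directly from \Cref{prop:evi-implies-stability}, since $\operatorname{Gap}_{\rm PS}(\mu)\le\EVIGap(\mu)$.

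\textbf{Strongly convex bound \eqref{eq:estimated-to-strongly-convex-stability}.} Here I would not pass through the EVI gap. Instead, I would use strong convexity of $\theta'\mapsto\tDPR(\theta_i,\theta')$, which holds because the pointwise bound \eqref{def:sc} survives expectation over $z\sim\cD(\theta_i)$. For any $\theta'$ this gives
\[
\tDPR(\theta_i,\theta_i)-\tDPR(\theta_i,\theta')\le\ip{S(\theta_i)}{\theta_i-\theta'}-\frac\gamma2\norm{\theta_i-\theta'}_2^2.
\]
Substitute $S(\theta_i)=\widetilde\nabla_i-e_i$. The error term then satisfies
\[
-\ip{e_i}{\theta_i-\theta'}\le\xi\norm{\theta_i-\theta'}_2.
\]
Combining it with the quadratic penalty and maximizing $\xi s-\frac\gamma2 s^2$ over $s\ge0$ bounds it by $\xi^2/(2\gamma)$, uniformly in $i$ and $\theta'$. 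Averaging over $\lambda$ leaves the empirical term, which is bounded by $\tau$ via \eqref{eq:estimated-evi-certificate}. Taking the supremum over $\theta'$ then gives $\operatorname{Gap}_{\rm PS}(\mu)\le\tau+\xi^2/(2\gamma)$.

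\textbf{Main obstacle.} The only subtle point is to keep the quadratic term \emph{inside} the per-$i$ inequality before optimizing. This matters because the error $e_i$ is different for each atom, while the comparator $\theta'$ is common to all of them. Absorbing pointwise in $s_i=\norm{\theta_i-\theta'}_2$ handles this cleanly, since the bound $\xi s_i-\frac\gamma2 s_i^2\le\xi^2/(2\gamma)$ does not depend on $s_i$. The rest is routine; the main check is that the interchange of expectation and differentiation, assumed throughout the paper, justifies the strong-convexity step for $\tDPR$.
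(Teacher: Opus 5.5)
Your proposal is correct and follows essentially the same route as the paper's proof. You bound the EVI gap by adding and subtracting the empirical gradient and applying Cauchy--Schwarz with $\norm{\theta_i-\theta'}_2\le 2R$, and you get the convex case from \Cref{prop:evi-implies-stability}. For the strongly convex case you absorb the per-atom error into the quadratic strong-convexity penalty, which is exactly the paper's inequality $\ip{u}{v}-(\gamma/2)\norm{v}_2^2\le\norm{u}_2^2/(2\gamma)$ in scalarized form.
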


\begin{proof}
Fix a comparator $\theta'\in\Theta$. Adding and subtracting each empirical
gradient and applying Cauchy--Schwarz gives
\begin{align*}
    \sum_i\lambda_i
    \ip{\E_{z\sim\cD(\theta_i)}[\nabla\ell(z;\theta_i)]}
       {\theta_i-\theta'}
    &\leq
    \sum_i\lambda_i\ip{\widetilde\nabla_i}{\theta_i-\theta'} \\
    &+\sum_i\lambda_i
      \norm{\E_{z\sim\cD(\theta_i)}[\nabla\ell(z;\theta_i)]
      -\widetilde\nabla_i}_2 \cdot \norm{\theta_i-\theta'}_2\\
    &\leq\tau+2R\xi.
\end{align*}
Here we used $\norm{\theta_i-\theta'}\leq2R$, which follows from
$\Theta\subseteq B_2(0,R)$. 
Since the inequality holds for every $\theta'$, maximizing over
$\theta'\in\Theta$ proves \eqref{eq:estimated-to-population-evi}, a bound on the EVI-gap of the mixture. The bound on the stability gap follows from \Cref{prop:evi-implies-stability}.

Now suppose that the loss is pointwise $\gamma$-strongly convex. For every $i$ and
$\theta'$,
\begin{align*}
    \tDPR(\theta_i,\theta_i)-\tDPR(\theta_i,\theta')
    &\leq
    \ip{\E_{z\sim\cD(\theta_i)}[\nabla\ell(z;\theta_i)]}
       {\theta_i-\theta'}
    -\frac\gamma2\norm{\theta_i-\theta'}_2^2\\
    &\leq
    \ip{\widetilde\nabla_i}{\theta_i-\theta'}
    +\frac{\norm{\E_{z\sim\cD(\theta_i)}[\nabla\ell(z;\theta_i)]
    -\widetilde\nabla_i}_2^2}{2\gamma}.
\end{align*}
The second inequality uses
$\ip{u}{v}-(\gamma/2)\norm{v}_2^2\leq\norm{u}_2^2/(2\gamma)$, with $u$ equal to the error
(population gradient minus the empirical gradient) and
$v=\theta_i-\theta'$. Averaging over $i$ with weights $\lambda_i$ and then
maximizing over $\theta'$ proves
\eqref{eq:estimated-to-strongly-convex-stability}.
\end{proof}

It remains to control the gradient errors uniformly despite the adaptive
deployment points which we do via the following lemma. The proof follows directly from Hoeffding's inequality and a union bound, and is given in \cref{app:adaptive-concentration}.

\begin{lemma}[Uniform concentration over adaptive deployments]
\label{lem:adaptive-gradient-concentration}
Fix a deterministic cutting-plane cap $T$, an accuracy $\xi>0$, a confidence
level $\delta\in(0,1)$, and a batch size $n\geq1$.  If
\begin{equation}
    n\geq 8
    \frac{G^2}{\xi^2}
    \log\frac{2T}{\delta}
    \label{eq:adaptive-sample-size}
\end{equation}
then, with probability at least $1-\delta$,
\[
    \norm{\widetilde\nabla_i-
    \E_{z\sim\cD(\theta_i)}[\nabla\ell(z;\theta_i)]}_2\leq\xi
\]
at every deployment occurring in the first $T$ cutting-plane rounds.
\end{lemma}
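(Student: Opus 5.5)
The plan is a vector-valued bounded-difference (Hoeffding-type) inequality for each deployment, applied conditionally on the past, followed by a union bound over at most $T$ deployments.

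First we handle the adaptivity. Let $\mathcal{F}_{t-1}$ be the sigma-algebra generated by all samples collected before the $t$-th deployment. The cutting-plane method and the separation oracle are deterministic, so the query $\theta_t$ is $\mathcal{F}_{t-1}$-measurable. Conditional on $\mathcal{F}_{t-1}$, the fresh batch $z_{t,1},\dots,z_{t,n}$ is i.i.d. from $\cD(\theta_t)$. Hence the vectors $X_j\coloneqq\nabla\ell(z_{t,j};\theta_t)-S(\theta_t)$ are conditionally i.i.d., have mean zero, and satisfy $\norm{X_j}_2\le 2G$ by \eqref{eq:sample-gradient-bound}. To avoid indexing by a random number of deployments, we can imagine for each round $i\le T$ a batch that is drawn only if a deployment happens; the number of deployments is at most $T$.

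Next comes the per-deployment bound, which is the main step. I would use dimension-free concentration of the Euclidean norm of a sum of bounded independent vectors. Set $f(X_1,\dots,X_n)=\norm{\frac1n\sum_j X_j}_2$. Changing one coordinate changes $f$ by at most $4G/n$, so McDiarmid's inequality gives $\Pr[f\ge \E f+s]\le \exp(-ns^2/(8G^2))$. Independence and zero mean give $\E f\le\sqrt{\E f^2}=\sqrt{\frac1{n^2}\sum_j\E\norm{X_j}^2}\le 2G/\sqrt n$. This uses only the inner-product structure of $\R^d$ and is the reason no factor of $d$ appears. Combining the two, the failure probability at level $\xi$ is $\exp(-n(\xi-2G/\sqrt n)^2/(8G^2))$.

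Finally, the condition \eqref{eq:adaptive-sample-size} forces $2G/\sqrt n\le \xi/\sqrt{2\log(2T/\delta)}$, which is a small fraction of $\xi$ whenever $\log(2T/\delta)\ge\log 2$. The remaining exponent is then at least $\log(2T/\delta)$ up to the constant, so each deployment fails with conditional probability at most $\delta/T$. The constant $8$ may need slight adjustment, for instance by using the sharper Pinelis martingale inequality $\Pr[\norm{\sum X_j}\ge s]\le 2\exp(-s^2/(2\sum\norm{X_j}_\infty^2))$, which directly gives $2\exp(-n\xi^2/(8G^2))\le\delta/T$. This last route is the one I would actually commit to, since it matches the stated form with the factor $2T$. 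Taking expectations over $\mathcal{F}_{t-1}$ turns each conditional bound into an unconditional one, and a union bound over the at most $T$ deployments completes the argument.

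The main obstacle I expect is obtaining a dimension-free tail bound with constants clean enough to match \eqref{eq:adaptive-sample-size}. The adaptivity itself is handled entirely by conditioning.
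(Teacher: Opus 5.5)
Your proposal is correct and matches the paper's proof. You condition on the history so that the deployed model is fixed and its batch is fresh, apply a dimension-free vector Hoeffding bound to the centered gradients (norm at most $2G$) to get conditional failure probability $2\exp(-n\xi^2/(8G^2))\le\delta/T$, and then finish with the tower property and a union bound over the $T$ rounds. The Pinelis inequality you commit to is exactly the ``standard Hoeffding inequality'' the paper invokes, and it is the right choice, since your McDiarmid-plus-mean detour would not close with the constant $8$ for general $T/\delta$.
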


\paragraph{Phase II: Extracting the Mixture.}

Phase I leaves us with a finite constraint system. Phase II converts it into
a dual certificate using minimax duality and only the recorded query points
and gradients. For $\lambda\in\Delta_K$ and $\theta'\in\Theta$, recall the empirical value of the game
\[
    \widehat{\Psi}_{\cV}(\lambda,\theta')
    \coloneqq
    \sum_{i=1}^K\lambda_i\ip{a_i}{\theta_i-\theta'}.
\]
We now show that if $\cV$ is $\eps$-spanning, then one can extract weights over the elements of $\cV$ that produce the desired mixture.
\begin{lemma}[Spanning--mixture duality]
\label{lem:spanning-mixture-duality}
If $\cV$ is $\eps$-spanning, then there exists
$\lambda\in\Delta_K$ such that
\begin{equation}
    \max_{\theta'\in\Theta}
   \widehat{\Psi}_{\cV}(\lambda,\theta')
    \leq\eps.
    \label{eq:finite-mixture-certificate}
\end{equation}
If the minimization over $\lambda$ is solved to additive accuracy
$\eps$, the right-hand side becomes $2\eps$.
\end{lemma}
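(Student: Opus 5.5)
The plan is to prove the statement as a finite minimax argument between the simplex $\Delta_K$ and the compact convex set $\Theta$, applied to the bilinear function $\widehat\Psi_{\cV}(\lambda,\theta')=\sum_{i=1}^K\lambda_i\ip{a_i}{\theta_i-\theta'}$. This function is affine in $\lambda$ for each fixed $\theta'$, and affine (hence concave and continuous) in $\theta'$ for each fixed $\lambda$. Both $\Delta_K$ and $\Theta$ are nonempty, compact, and convex. So the von Neumann/Sion minimax theorem \cite{sion1958general} applies, and gives
\[
    \min_{\lambda\in\Delta_K}\max_{\theta'\in\Theta}\widehat\Psi_{\cV}(\lambda,\theta')
    =\max_{\theta'\in\Theta}\min_{\lambda\in\Delta_K}\widehat\Psi_{\cV}(\lambda,\theta').
\]
Compactness together with continuity ensures that all the minima and maxima are attained. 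In particular, the outer minimizer $\lambda$ exists, because $\lambda\mapsto\max_{\theta'}\widehat\Psi_{\cV}(\lambda,\theta')$ is a maximum of affine functions over a compact set, hence convex and continuous on $\Delta_K$.

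Next we evaluate the right-hand side. For fixed $\theta'$, the map $\lambda\mapsto\widehat\Psi_{\cV}(\lambda,\theta')$ is linear on the simplex, so its minimum is attained at a vertex $e_i$:
\[
    \min_{\lambda\in\Delta_K}\widehat\Psi_{\cV}(\lambda,\theta')=\min_{i\in[K]}\ip{a_i}{\theta_i-\theta'}.
\]
Maximizing over $\theta'$ then turns the right-hand side into exactly the left side of the spanning condition \eqref{eq:spanning}, which is at most $\eps$ by hypothesis. Taking $\lambda$ to be the minimizer on the left side yields \eqref{eq:finite-mixture-certificate}.

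For the approximate-solution clause, let $\hat\lambda$ satisfy
\[
    \max_{\theta'}\widehat\Psi_{\cV}(\hat\lambda,\theta')\le\min_{\lambda}\max_{\theta'}\widehat\Psi_{\cV}(\lambda,\theta')+\eps.
\]
Then its value is at most $\eps+\eps=2\eps$.

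No step is a real obstacle. The only care point is checking the hypotheses of the minimax theorem: both sets must be compact and convex, and the function must be bilinear and continuous. Bilinearity in fact allows the finite-dimensional von Neumann version, so we need not worry about the infinite-dimensional $\Delta(\Theta)$ that appeared in \cref{thm:approximate-existence}. We should also note that the spanning definition already uses $\max$ over $\Theta$, which is legitimate because the function $\theta'\mapsto\min_i\ip{a_i}{\theta_i-\theta'}$ is continuous.
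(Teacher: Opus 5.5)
Your proposal is correct and follows essentially the same argument as the paper. You apply Sion's minimax theorem to the bilinear function $\widehat{\Psi}_{\cV}$ on $\Delta_K\times\Theta$, reduce the inner minimum over the simplex to a minimum over vertices to recover the spanning quantity, and add $\eps$ for an approximate minimizer; your extra remarks on attainment of the outer minimum are a harmless, slightly more careful addition.
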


\begin{proof}
The sets $\Delta_K$ and $\Theta$ are compact and convex, and $\widehat{\Psi}_{\cV}$ is
affine in both arguments. Therefore, Sion's minimax theorem
\citep[Theorem~3.4]{sion1958general} gives
\begin{align*}
    \min_{\lambda\in\Delta_K}\max_{\theta'\in\Theta}
     \widehat{\Psi}_{\cV}(\lambda,\theta')
    &=\max_{\theta'\in\Theta}\min_{\lambda\in\Delta_K}
     \widehat{\Psi}_{\cV}(\lambda,\theta')\\
    &=\max_{\theta'\in\Theta}\min_{i\in[K]}
      \ip{a_i}{\theta_i-\theta'}
    \leq\eps.
\end{align*}
The second equality follows because a linear function over the simplex
attains its minimum at a vertex. Thus, an exact minimizer satisfies
\eqref{eq:finite-mixture-certificate}; an additive-$\eps$ minimizer increases
the right-hand side to $2\eps$.
\end{proof}

This is the finite-support extraction step of the EVI construction of
\citet{EVI}. 
The weights $\lambda$ can be computed
offline in  time $\poly(K,G,R,1/\eps)$  while performing no further deployments. See \citet{EVI}.

\paragraph{Putting the Pieces Together.}

We now combine the deterministic empirical certificate with the uniform
concentration event.

\begin{proof}[Proof of \cref{thm:finite-sample-master}]
Run \cref{alg:few-deployment} with the round cap obtained from the
residual-ball and volumetric argument in
\cref{app:volumetric-localization}, at accuracy $\eps/3$; the factor of $3$
is absorbed by the implicit constant. That argument gives an $(\eps/3)$-spanning family
$\{(\theta_i,\widetilde\nabla_i)\}_{i=1}^K$ with $1\leq K\leq T$. We then
compute the weights to additive accuracy
$\eps/3$. By \cref{lem:spanning-mixture-duality}, the resulting
$\lambda\in\Delta_K$ defines a mixture
$\mu=\sum_{i=1}^K\lambda_i\delta_{\theta_i}$ satisfying
\begin{equation}
    \widehat\Psi_{\cV}(\lambda)
    =
    \max_{\theta'\in\Theta}
    \sum_{i=1}^K\lambda_i
    \ip{\widetilde\nabla_i}{\theta_i-\theta'}
    \leq\frac{2\eps}{3}.
    \label{eq:empirical-first-order-certificate}
\end{equation}

For case~\textup{(i)}, set $\xi=\eps/(6R)$. For case~\textup{(ii)}, set
$\xi^2=2\gamma\eps/3$. In either case, the corresponding assumption on $n$,
together with \cref{lem:adaptive-gradient-concentration} applied to the
deterministic cap $T$, shows that with probability at least $1-\delta$,
\begin{equation}
    \norm{\widetilde\nabla_i-
    \E_{z\sim\cD(\theta_i)}[\nabla\ell(z;\theta_i)]}_2\leq\xi
    \qquad\text{for every deployed model }\theta_i.
    \label{eq:uniform-master-gradient-event}
\end{equation}

In case~\textup{(i)}, \cref{lem:estimated-evi-transfer} with
$\tau=2\eps/3$ gives
\[
    \operatorname{Gap}_{\rm EVI}(\mu)
    \leq\frac{2\eps}{3}+2R\xi
    =\eps.
\]
Thus, $\mu$ is $\eps$-performatively EVI-stable and, by
\cref{prop:evi-implies-stability}, also $\eps$-performatively stable. 

In case~\textup{(ii)}, the strongly convex conclusion of
\cref{lem:estimated-evi-transfer} gives
\[
    \operatorname{Gap}_{\rm PS}(\mu)
    \leq\frac{2\eps}{3}+\frac{\xi^2}{2\gamma}
    =\eps.
\]
The general part of the same lemma also shows explicitly that
\[
    \operatorname{Gap}_{\rm EVI}(\mu)
    \leq\frac{2\eps}{3}+2R\xi
    =\frac{2\eps}{3}+2R\sqrt{\frac{2\gamma\eps}{3}}.
\]
Finally, if the strongly convex loss is run with the batch size in
case~\textup{(i)}, set $\xi=\eps/(6R)$ instead. The case~\textup{(i)}
argument then proves $\operatorname{Gap}_{\rm EVI}(\mu)\leq\eps$; strong
convexity is not needed for this last implication.
\end{proof}

The three ingredients that make up the construction---the cutting-plane algorithm, the estimation of gradients via samples, and the offline computation of the atomic mixture---all serve separate roles. The first controls the number of deployments, the second controls the batch
size at each deployment, and the third is entirely offline and done to extract a mixture from the cuts produced.

\section{A Deployment Lower Bound}
\label{sec:lower-bound}

Finally, we show that the deployment guarantee from the previous section is
essentially tight. Rather than constructing a hard performative prediction instance from scratch, the main observation behind our lower bound is that performative prediction contains
black-box convex optimization as a special case. In particular, we can choose
the distribution map so that deploying a model reveals exactly the information
returned by a first-order oracle. Lower bounds on the number of oracle queries
required for convex optimization then translate directly into lower bounds on
the number of model deployments.

\begin{theorem}[Deployment lower bound]
\label{thm:deployment-lower-bound}
There exist universal constants $c_1,c_2>0$ such that the following holds for
every $d\geq2$ and $\eps \in (0, c_2 / d]$.
Any possibly randomized algorithm that, on every $d$-dimensional instance,
returns an $\eps$-performatively stable mixture with probability at least $2/3$
must in the worst case deploy at least
\[
    c_1\min\left\{
        \frac{1}{\eps^2},
        d\log\frac{1}{\eps}
    \right\}.
\]
many models $\theta$. The lower bound holds even when $\Theta=B_2(0,1)$, the loss $\ell(z;\theta)$
is affine in $\theta$, uniformly bounded by a constant, and every distribution
$\cD(\theta)$ is supported on a single point.
\end{theorem}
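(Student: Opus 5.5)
I would prove the bound by a reduction from first-order convex optimization over the ball, as the paragraph preceding the statement suggests. Fix a convex, $1$-Lipschitz objective $f:B_2(0,1)\to\R$ from a hard family for first-order methods, for instance a max of affine functions $f(\theta)=\max_j(\ip{v_j}{\theta}+b_j)$ with bounded slopes and offsets, as in the classical resisting-oracle or random-instance lower bounds. Take $\cZ=\R^d\times\R$ and let $\cD(\theta)$ be the point mass at $z(\theta)=(g(\theta),f(\theta)-\ip{g(\theta)}{\theta})$, where $g(\theta)\in\partial f(\theta)$ is the subgradient the oracle returns. Define the affine loss $\ell((g,b);\theta')=\ip{g}{\theta'}+b$. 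On $B_2(0,1)$ it is bounded by a constant, it is affine in $\theta'$, and every $\cD(\theta)$ is a single point, so the instance satisfies the side conditions in \Cref{thm:deployment-lower-bound}.

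A single deployment of $\theta$ reveals exactly $z(\theta)$, i.e.\ the pair $(f(\theta),g(\theta))$, and nothing else. Any deployment algorithm is therefore a first-order oracle algorithm with the same query count. The key computation is the stability gap of a mixture $\mu$. We have $\tDPR(\theta,\theta)=f(\theta)$ and $\tDPR(\theta,\theta')=f(\theta)+\ip{g(\theta)}{\theta'-\theta}$, so
\[
\operatorname{Gap}_{\rm PS}(\mu)=\max_{\theta'\in\Theta}\E_{\theta\sim\mu}\ip{g(\theta)}{\theta-\theta'}.
\]
Because the loss is affine, this coincides with $\operatorname{Gap}_{\rm EVI}(\mu)$. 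By convexity, $f(\theta)-f(\theta')\le\ip{g(\theta)}{\theta-\theta'}$. Choosing $\theta'=\theta^\star$ and using Jensen with $\bar\theta=\E_\mu\theta$ gives
\[
f(\bar\theta)-\min f\le\E_\mu f(\theta)-f(\theta^\star)\le\operatorname{Gap}_{\rm PS}(\mu)\le\eps.
\]
One subtlety is that the algorithm may output atoms $\theta$ it never deployed, whose $g(\theta)$ it does not know. This is harmless: the output $\bar\theta$ is a point computed from the transcript, so we obtain an $\eps$-minimizer produced after $K$ oracle calls, or $K+O(1)$ if we also allow querying $\bar\theta$.

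It then remains to invoke a known lower bound. For any randomized first-order method on $1$-Lipschitz convex functions over the unit ball, succeeding with probability $2/3$ at accuracy $\eps$ requires $\Omega(\min\{1/\eps^2,\,d\log(1/\eps)\})$ queries. The $d\log(1/\eps)$ term comes from the Nemirovski--Yudin center-of-gravity-type construction, and the $1/\eps^2$ term from the high-dimensional regime. The randomized versions are due to Nemirovski--Yudin and later sharpenings. The condition $\eps\le c_2/d$ is exactly where the $d\log(1/\eps)$ regime holds and the minimum is meaningful. I would restrict to instances with bounded offsets, e.g.\ by normalizing $f(0)$, so that the loss stays uniformly bounded.

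I expect the main obstacle to be the last step: citing a lower bound that holds for randomized algorithms with constant success probability, over the ball rather than a box, and for a function class whose oracle answers fit the boundedness requirement. If I cannot cite one cleanly, my fallback is to rescale a box-based hard instance inscribed in $B_2(0,1)$, paying a $\sqrt d$ factor in Lipschitz constant that can be absorbed into $\eps$ up to the constants $c_1,c_2$. I would also convert the randomized case via Yao's principle over a random instance distribution.
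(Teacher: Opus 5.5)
Your proposal is correct and matches the paper's proof: a point-mass distribution map that reveals the first-order oracle answer, an affine loss so that $\operatorname{Gap}_{\rm PS}(\mu)=\max_{\theta'}\E_{\theta\sim\mu}\ip{g(\theta)}{\theta-\theta'}$, convexity plus Jensen to turn the mean of the mixture into an $\eps$-minimizer, and then a randomized oracle lower bound. The citation you were looking for is \citet[Proposition~V.5]{braun2017lower}, which gives the bounded-error randomized $\Omega(d\log(1/\eps))$ bound directly on $B_2(0,1)$ for $\eps\le d^{-1}$, so the box-rescaling fallback is unnecessary; also, your offset $b$ cancels in the gap and can be dropped (the paper uses $1+\ip{v}{\theta'}$), which makes the loss uniformly bounded on all of $\cZ$ without normalizing $f$.
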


\begin{proof}
We begin by showing how black-box convex optimization can be embedded into  the performative
prediction framework. Let $\Theta=B_2(0,1)$  and consider the problem of minimizing a convex $1$-Lipschitz function $f:\Theta\to\R$ given access to a first-order oracle. In the oracle model of
\citet[Section~II-B]{braun2017lower}, querying a point $\theta\in\Theta$
returns $\bigl(f(\theta),g_f(\theta)\bigr)$ where $ g_f(\theta)\in\partial f(\theta)$ is a subgradient of $f$ at $\theta$.


Fix a function $f$ together with this first-order oracle. We associate with it
the following performative prediction instance:
\[
    \cZ=\R\times B_2(0,1),
    \qquad
    \cD_f(\theta)
    =
    \delta_{(f(\theta),g_f(\theta))},
    \qquad
    \ell((a,v);\theta')
    =
    1+\ip{v}{\theta'}.
\]
The first coordinate $a$ does not enter the loss. Its only purpose is to make
the observation produced by deploying $\theta$ equal to the full first-order
oracle response $(f(\theta),g_f(\theta))$. Thus, one model deployment can be
simulated using exactly one oracle query, and conversely the learner receives
no more information from a deployment than the corresponding first-order
oracle response. The loss $\ell$ is affine in its model argument and since
$\theta',g_f(\theta)\in B_2(0,1)$, the loss is bounded by 2.

We next show that finding a performatively stable mixture on this instance yields a solution to the original convex optimization problem. Let $\mu$ be any distribution over $\Theta$, and let $\theta_f^\star\in\argmin_{\theta\in\Theta}f(\theta).$ For the performative instance above $\tDPR(\theta,\theta') = 1+\ip{g_f(\theta)}{\theta'}$ and
\begin{align*}
    \operatorname{Gap}_{\rm PS}(\mu)
    &=
    \max_{\theta'\in\Theta}
    \E_{\theta\sim\mu}
    \left[
        \tDPR(\theta,\theta)-\tDPR(\theta,\theta')
    \right]\\
    &=
    \max_{\theta'\in\Theta}
    \E_{\theta\sim\mu}
    \left[
        \ip{g_f(\theta)}{\theta-\theta'}
    \right]\\
    &\geq
    \E_{\theta\sim\mu}
    \left[
        \ip{g_f(\theta)}{\theta-\theta_f^\star}
    \right]\\
    &\geq
    \E_{\theta\sim\mu}
    \left[
        f(\theta)-f(\theta_f^\star)
    \right],
\end{align*}
where the final inequality follows from convexity of $f$.
Consequently, every $\eps$-performatively stable mixture satisfies
\[
    \E_{\theta\sim\mu}[f(\theta)]-f(\theta_f^\star)
    \leq\eps.
\]
Moreover, because $\Theta$ is convex, $\bar\theta
    \coloneqq
    \E_{\theta\sim\mu}[\theta]
    \in\Theta$, 
and Jensen's inequality gives
\[
    f(\bar\theta)-f(\theta_f^\star)
    \leq
    \E_{\theta\sim\mu}[f(\theta)]-f(\theta_f^\star)
    \leq\eps.
\]
Thus, from any $\eps$-performatively stable mixture we can recover an
$\eps$-optimal solution to the original convex optimization problem simply by
taking its mean.
It follows that any possibly randomized performative prediction algorithm
using $T$ deployments implies a randomized first-order optimization algorithm
using exactly $T$ oracle queries.

We can now apply the oracle lower bound of
\citet[Proposition~V.5]{braun2017lower}. For every $1\leq p<\infty$,
$\delta>0$, and accuracy $\eta\leq d^{-1/p-\delta}$, they construct a family of convex $1$-Lipschitz functions on
$B_p(0,1)$ such that,
for algorithms with error probability at most $P_e$, the distributional
oracle complexity of finding an $\eta$-minimum is $\Omega\left(
        (1-P_e)d\log\frac{1}{\eta}
    \right).$
Specializing to $p=2$, $\delta=1/2$, and $P_e=1/3$ gives $\Omega\left(d\log\frac{1}{\eta}\right)$
oracle queries whenever $\eta\leq d^{-1}$. After adjusting universal constants, this implies
\[
    c_1\min\left\{
        \frac{1}{\eps^2},
        d\log\frac{1}{\eps}
    \right\}
\]
deployments for some universal $c_1>0$, completing the proof.
\end{proof}


\printbibliography

@article{braun2017lower,
  title   = {Lower Bounds on the Oracle Complexity of Nonsmooth Convex
             Optimization via Information Theory},
  author  = {Braun, G{\'a}bor and Guzm{\'a}n, Crist{\'o}bal and Pokutta, Sebastian},
  journal = {IEEE Transactions on Information Theory},
  year    = {2017},
}

@book{groetschel1988geometric,
  title     = {Geometric Algorithms and Combinatorial Optimization},
  author    = {Gr{\"o}tschel, Martin and Lov{\'a}sz, L{\'a}szl{\'o} and Schrijver, Alexander},
  series    = {Algorithms and Combinatorics},
  year      = {1988},
  publisher = {Springer},
}

@inproceedings{perdomo2020performative,
  title     = {Performative Prediction},
  author    = {Perdomo, Juan and Zrnic, Tijana and Mendler-D{\"u}nner, Celestine and Hardt, Moritz},
  booktitle = {International Conference on Machine Learning},
  year      = {2020}
}

@article{perdomo2025revisiting,
  title   = {Revisiting the Predictability of Performative, Social Events},
  author  = {Perdomo, Juan C.},
  journal = {International Conference on Machine Learning},
  year    = {2025}
}

@article{freund1999adaptive,
  title   = {Adaptive Game Playing Using Multiplicative Weights},
  author  = {Freund, Yoav and Schapire, Robert E.},
  journal = {Games and Economic Behavior},
  year    = {1999},
}

@inproceedings{EVI,
  title     = {Expected Variational Inequalities},
  author    = {Zhang, Brian Hu and Anagnostides, Ioannis and Tewolde, Emanuel and Berker, Ratip Emin and Farina, Gabriele and Conitzer, Vincent and Sandholm, Tuomas},
  booktitle = {International Conference on Machine Learning},
  year      = {2025}
}

@article{lee2026partially,
  title={Partially Performative Prediction},
  author={Lee, Jaewook and Zrnic, Tijana},
  journal={arXiv preprint arXiv:2606.07890},
  year={2026}
}

@inproceedings{hardt2026future,
  title={The Future of Prediction: In Conversation with Oskar Morgenstern},
  author={Hardt, Moritz},
  booktitle={International Congress of Mathematicians 2026},
  year={2026},
  organization={SIAM}
}

@inproceedings{miller2021outside,
  title={Outside the echo chamber: Optimizing the performative risk},
  author={Miller, John P and Perdomo, Juan C and Zrnic, Tijana},
  booktitle={International Conference on Machine Learning},
  year={2021},
}

@article{kim2022making,
  title={Making decisions under outcome performativity},
  author={Kim, Michael P and Perdomo, Juan C},
  journal={Innovations in Theoretical Computer Science},
  year={2023}
}

@inproceedings{izzo2021learn,
  title={How to learn when data reacts to your model: performative gradient descent},
  author={Izzo, Zachary and Ying, Lexing and Zou, James},
  booktitle={International Conference on Machine Learning},
  year={2021},
}

@inproceedings{jagadeesan2022regret,
  title={Regret minimization with performative feedback},
  author={Jagadeesan, Meena and Zrnic, Tijana and Mendler-D{\"u}nner, Celestine},
  booktitle={International Conference on Machine Learning},
  year={2022},
}

@InProceedings{pluginpp,
  title = 	 {Plug-in Performative Optimization},
  author =       {Lin, Licong and Zrnic, Tijana},
  booktitle = 	 {International Conference on Machine Learning},
  year = 	 {2024},
}

@article{grunberg1954predictability,
  title   = {The predictability of social events},
  author  = {Grunberg, Emile and Modigliani, Franco},
  journal = {Journal of Political Economy},
  year    = {1954}
}

@article{li2022multi,
  title={Multi-agent performative prediction with greedy deployment and consensus seeking agents},
  author={Li, Qiang and Yau, Chung-Yiu and Wai, Hoi-To},
  journal={Advances in Neural Information Processing Systems},
  year={2022}
}

@inproceedings{piliouras2023multi,
  title={Multi-agent performative prediction: From global stability and optimality to chaos},
  author={Piliouras, Georgios and Yu, Fang-Yi},
  booktitle={ACM Conference on Economics and Computation},
  year={2023}
}

@article{papadimitriou2008computing,
  title={Computing correlated equilibria in multi-player games},
  author={Papadimitriou, Christos H and Roughgarden, Tim},
  journal={Journal of the ACM},
  year={2008},
}

@article{narang2023multiplayer,
  title={Multiplayer performative prediction: Learning in decision-dependent games},
  author={Narang, Adhyyan and Faulkner, Evan and Drusvyatskiy, Dmitriy and Fazel, Maryam and Ratliff, Lillian J},
  journal={Journal of Machine Learning Research},
  year={2023}
}

@article{sigecom,
  title   = {Turning defense into offense in {$O(\log 1/\epsilon)$} steps: Efficient constructive proof of the minimax theorem},
  author  = {Gabriele Farina},
  year    = {2026},
  journal = {ACM SIGecom Exchanges}
}

@inproceedings{hardt2016strategic,
  title     = {Strategic classification},
  author    = {Hardt, Moritz and Megiddo, Nimrod and Papadimitriou, Christos and Wootters, Mary},
  booktitle = {Innovations in Theoretical Computer Science},
  year      = {2016}
}

@article{anagnostides2026computational,
  title   = {On the Computational Complexity of Performative Prediction},
  author  = {Anagnostides, Ioannis and Chauhan, Rohan and Panageas, Ioannis and Sandholm, Tuomas and Yan, Jingming},
  journal = {International Conference on Machine Learning},
  year    = {2026}
}

@inproceedings{dews,
  author    = {Perdomo, Juan Carlos and Britton, Tolani and Hardt, Moritz and Abebe, Rediet},
  title     = {Difficult Lessons on Social Prediction from Wisconsin Public Schools},
  year      = {2025},
  booktitle = {ACM Conference on Fairness, Accountability, and Transparency}
}

@article{mendler2024engine,
  title   = {An engine not a camera: Measuring performative power of online search},
  author  = {Mendler-D{\"u}nner, Celestine and Carovano, Gabriele and Hardt, Moritz},
  journal = {Advances in Neural Information Processing Systems},
  year    = {2024}
}

@article{farina2026stability,
  title   = {The stability of online algorithms in performative prediction},
  author  = {Farina, Gabriele and Perdomo, Juan Carlos},
  journal = {arXiv preprint arXiv:2602.24207},
  year    = {2026}
}

@inproceedings{brown2022performative,
  title={Performative prediction in a stateful world},
  author={Brown, Gavin and Hod, Shlomi and Kalemaj, Iden},
  booktitle={International Conference on Artificial Intelligence and Statistics},
  year={2022},
}

@book{morgenstern1928wirtschaftsprognose,
  title     = {Wirtschaftsprognose: Eine Untersuchung ihrer Voraussetzungen und M{\"o}glichkeiten},
  author    = {Morgenstern, Oskar},
  year      = {1928},
  publisher = {Springer}
}

@article{simon1954bandwagon,
  title   = {Bandwagon and underdog effects and the possibility of election predictions},
  author  = {Simon, Herbert A},
  journal = {Public Opinion Quarterly},
  year    = {1954}
}

@article{mendler2020stochastic,
  title   = {Stochastic optimization for performative prediction},
  author  = {Mendler-D{\"u}nner, Celestine and Perdomo, Juan and Zrnic, Tijana and Hardt, Moritz},
  journal = {Advances in Neural Information Processing Systems},
  year    = {2020}
}

@article{drusvyatskiy2023stochastic,
  title   = {Stochastic optimization with decision-dependent distributions},
  author  = {Drusvyatskiy, Dmitriy and Xiao, Lin},
  journal = {Mathematics of Operations Research},
  year    = {2023}
}

@article{chyn2021returns,
  title={The returns to early-life interventions for very low birth weight children},
  author={Chyn, Eric and Gold, Samantha and Hastings, Justine},
  journal={Journal of Health Economics},
  year={2021},
}

@article{merton1948self,
  title={The self-fulfilling prophecy},
  author={Merton, Robert K},
  journal={The Antioch Review},
  year={1948},
}

@article{hardt2026retraining,
  title={Retraining Seeks Stable Signals},
  author={Hardt, Moritz},
  journal={arXiv preprint arXiv:2607.15623},
  year={2026}
}

@article{sion1958general,
  author  = {Sion, Maurice},
  title   = {On General Minimax Theorems},
  journal = {Pacific Journal of Mathematics},
  year    = {1958},
}

@article{foster2021forecast,
  title   = {Forecast Hedging and Calibration},
  author  = {Foster, Dean P. and Hart, Sergiu},
  journal = {Journal of Political Economy},
  year    = {2021},
}

@article{nemirovski2010accuracy,
  title   = {Accuracy Certificates for Computational Problems with Convex Structure},
  author  = {Nemirovski, Arkadi and Onn, Shmuel and Rothblum, Uriel G.},
  journal = {Mathematics of Operations Research},
  year    = {2010},
}

@article{perdomo2025defense,
  title={In defense of defensive forecasting},
  author={Perdomo, Juan Carlos and Recht, Benjamin},
  journal={arXiv preprint arXiv:2506.11848},
  year={2025}
}

@inproceedings{daskalakis2025efficient,
  title={Efficient learning and computation of linear correlated equilibrium in general convex games},
  author={Daskalakis, Constantinos and Farina, Gabriele and Fishelson, Maxwell and Pipis, Charilaos and Schneider, Jon},
  booktitle={ACM Symposium on Theory of Computing},
  year={2025}
}

@inproceedings{jiang2020improved,
  author    = {Jiang, Haotian and Lee, Yin Tat and Song, Zhao and Wong, Sam Chiu-wai},
  title     = {An Improved Cutting Plane Method for Convex Optimization, Convex--Concave Games, and Its Applications},
  booktitle = {ACM SIGACT Symposium on Theory of Computing},
  year      = {2020},
}

@inproceedings{farina2024polyhedral,
  title     = {Polynomial-Time Computation of Exact {$\Phi$}-Equilibria in Polyhedral Games},
  author    = {Farina, Gabriele and Pipis, Charilaos},
  booktitle = {Advances in Neural Information Processing Systems},
  year      = {2024}
}

\appendix

\section{Proofs for the Structural Results}
\label{app:structural-proofs}

Here we flesh out the proofs deferred from \cref{sec:structural}. The first two
constructions show why convexity is necessary and why exact
stability need not be attained. We then prove the relationship between
performative stability and EVI-stability for point masses and give the two
counterexamples for approximate solutions.

\begin{proof}[Proof of \cref{prop:no-exact-mixture}]
Let $\Theta=[-1,1]$, let $\cZ=\{-1,1\}$, and set
\[
    s(\theta)=
    \begin{cases}
        -1, & \theta<0,\\
        +1, & \theta\geq0,
    \end{cases}
    \qquad
    \cD(\theta)=\delta_{s(\theta)},
    \qquad
    \ell(z;\theta')=1+z\theta'.
\]
The loss is bounded, jointly continuous, and affine in $\theta'$. The
decoupled performative risk is
$\tDPR(\theta,\theta')=1+s(\theta)\theta'$. Thus, if $\theta\sim\mu$,
\begin{align*}
    \operatorname{Gap}_{\rm PS}(\mu)
    &=\max_{\theta'\in[-1,1]}
      \bigl\{\E[s(\theta)\theta]-\theta'\E[s(\theta)]\bigr\}\\
    &=\E|\theta|+|\E s(\theta)|.
\end{align*}
This gap is positive for every $\mu$. Indeed, the claim is immediate if
$\E|\theta|>0$, while $\E|\theta|=0$ implies that $\theta=0$ almost surely and hence
$\E s(\theta)=1$. On the other hand, for any $a\in(0,1]$, the uniform distribution
over $\{-a,a\}$ has gap exactly $a$. Therefore, the infimum of the gap is zero,
but no distribution attains it.
\end{proof}

\begin{proof}[Proof of \cref{prop:nonconvex-no-mixture}]
Let $\Theta=\cZ=[0,1]$, $\cD(\theta)=\delta_\theta$, and
$\ell(z;\theta')=1-(\theta'-z)^2$. The loss takes values in $[0,1]$ and is
concave in $\theta'$. Moreover, $\tDPR(\theta,\theta)=1$ for every
$\theta\in\Theta$. Hence, any distribution $\mu$ satisfies
\begin{align*}
    \operatorname{Gap}_{\rm PS}(\mu)
    &=\max_{\theta'\in[0,1]}\E_{\theta\sim\mu}(\theta'-\theta)^2\geq
      \frac12\E[\theta^2+(1-\theta)^2]
    \geq\frac14,
\end{align*}
where the first inequality lower-bounds the maximum by the average of the
choices $\theta'=0$ and $\theta'=1$, and the second uses
$\theta^2+(1-\theta)^2\geq1/2$. Thus, no mixture can have stability gap less
than $1/4$.
\end{proof}

\begin{proof}[Proof of \Cref{prop:stability-not-evi}]
For a fixed $\theta\in\Theta$,
\[
    \operatorname{Gap}_{\rm PS}(\delta_\theta)
    = \tDPR(\theta, \theta)-\min_{\theta' \in \Theta} \tDPR(\theta, \theta').
\]
This quantity is zero if and only if $\theta$ minimizes the differentiable
convex function $F(\theta')=\tDPR(\theta,\theta')$. By the first-order
optimality condition for convex functions, this is equivalent to
\[
    \ip{\E_{z \sim \cD(\theta)} \nabla \ell(z;\theta)}{\theta'-\theta}\geq0
\]
for every $\theta'\in\Theta$. This is precisely the condition
$\EVIGap(\delta_\theta)=0$, proving the first statement.

For part~\textup{(i)}, consider the following one-dimensional instance:
\[
    \Theta=\cZ=[0,1],\qquad
    \ell(z;\theta')=\frac12(\theta'-z)^2,
    \qquad
    \cD(\theta)=\delta_{2\theta/3},
    \qquad
    \mu=\frac14\delta_0+\frac34\delta_1.
\]
On the unit interval, the loss is bounded by $1/2$, differentiable, convex,
and $1$-Lipschitz in its model argument. The induced risk and population
gradient are
\[
    \tDPR(\theta,\theta')=\frac12\left(\theta'-\frac23\theta\right)^2,
    \qquad
    \E_{z\sim\cD(\theta)}[\nabla\ell(z;\theta)]
    =\frac13\theta.
\]
We can now compute the performative stability gap directly:
\begin{align*}
    \min_{\theta' \in \Theta}\E_{\theta\sim\mu}\tDPR(\theta,\theta')
    = \min_{\theta' \in \Theta} \frac18(\theta')^2+\frac38\left(\theta'-\frac23\right)^2 = \frac1{24}\\
    \E_{\theta\sim\mu}\tDPR(\theta,\theta)
    =\frac14\tDPR(0,0)+\frac34\tDPR(1,1)
    =\frac1{24}.
\end{align*}
Thus, $\operatorname{Gap}_{\rm PS}(\mu)=0$. On the other hand, the EVI
expression is
\[
    \E_{\theta\sim\mu}
    \ip{\E_{z\sim\cD(\theta)}[\nabla\ell(z;\theta)]}{\theta-\theta'}
    =\frac34\cdot\frac13(1-\theta')
    =\frac14(1-\theta').
\]
Taking $\theta'=0$ maximizes this expression and gives
$\operatorname{Gap}_{\rm EVI}(\mu)=1/4$.

For part~\textup{(ii)}, let $\Theta=\cZ=[0,1]$,
$\cD(\theta)=\delta_\theta$, and define
\[
\ell(z;\theta')
\coloneqq
\begin{cases}
0,
    &z\leq\frac12,\\[2pt]
0,
    &z>\frac12 \text{ and } \theta'\leq1-z,\\[2pt]
\dfrac{(\theta'+z-1)^2}{2(2z-1)},
    &z>\frac12 \text{ and } 1-z<\theta'<z,\\[7pt]
\theta'-\frac12,
    &z>\frac12 \text{ and } \theta'\geq z.
\end{cases}
\]
For each fixed $z>1/2$, the derivative with respect to $\theta'$ in the three
nontrivial regions is, respectively,
\[
    0,
    \qquad
    \frac{\theta'+z-1}{2z-1},\text{and}
    \qquad
    1.
\]
This derivative is continuous, nondecreasing, and lies in $[0,1]$; hence,
$\ell(z;\cdot)$ is differentiable, convex, and $1$-Lipschitz. The loss is
also bounded in $[0,1/2]$.

Fix any $\alpha\in(0,1/2]$ and set
$\theta_\alpha=1/2+\alpha$. Deploying $\theta_\alpha$ produces
$z=\theta_\alpha$ deterministically, and the induced risk has minimum zero.
Therefore,
\[
    \operatorname{Gap}_{\rm PS}(\delta_{\theta_\alpha})
    =\ell(\theta_\alpha;\theta_\alpha)
    =\theta_\alpha-\frac12
    =\alpha.
\]
At the deployed point,
$\nabla \ell(\theta_\alpha;\theta_\alpha)=1$. It follows that
\[
    \operatorname{Gap}_{\rm EVI}(\delta_{\theta_\alpha})
    =\max_{\theta'\in[0,1]}(\theta_\alpha-\theta')
    =\theta_\alpha
    \geq\frac12
\]
as claimed.
\end{proof}

\section{Omitted Proofs in Algorithm Analysis}
\label{app:few-deployment-details}

In this appendix, we flesh out the remaining details in the analysis of \cref{alg:few-deployment}.

\subsection{Proof of \cref{lem:nonspanning-ball}}
\label{app:proof-nonspanning-ball}

\begin{proof}[Proof of \cref{lem:nonspanning-ball}]
If $\cV$ is not $\eps$-spanning, attainment of the maximum in
\eqref{eq:spanning} gives $\theta_0\in\Theta$ such that
\begin{equation}
    \ip{a_i}{\theta_i-\theta_0}>\eps
    \qquad\text{for every }i.
    \label{eq:strict-margin-u0}
\end{equation}
Set
\[
    \eta\coloneqq
    \frac{\eps}{3(\eps+GR)}
    \leq\frac13,
    \qquad
    \bar\theta=(1-\eta)\theta_0+\eta c.
\]
By convexity and the condition in \eqref{eq:rounding},
\begin{equation}
    B_2(\bar\theta,\eta r)\subseteq\Theta.
    \label{eq:contracted-inball}
\end{equation}
Indeed, every point $\theta'$ in this ball can be written as
$\theta'=(1-\eta)\theta_0+\eta\theta''$ for some
$\theta''\in B_2(c,r)\subseteq\Theta$, and hence belongs to
$\Theta$. Moreover,
\[
    \norm{\theta'-\theta_0}_2
    =\eta\norm{\theta''-\theta_0}_2
    \leq2\eta R,
\]
where the last inequality follows because $\theta'',\theta_0\in
\Theta\subseteq B_2(0,R)$.
Combining this inequality with \eqref{eq:strict-margin-u0} and applying
Cauchy--Schwarz gives, for every $i$,
\begin{align*}
    \ip{a_i}{\theta_i-\theta'}
    &>\eps-2G\eta R\\
    &\geq\eps/3>0.
\end{align*}
Thus, the closed ball $B_2(\bar\theta,\eta r)$ lies in $P(\cV)$. Moreover,
\[
    \eta r
    =\frac{r\eps}{3(\eps+GR)},
\]
as claimed.
\end{proof}

\subsection{Details on Cutting-Plane Algorithm}
\label{app:volumetric-localization}


\paragraph{Guarantees from \citet{jiang2020improved}.}
Since $\Theta\subseteq B_2(0,R)$, the box
$Q_0=[-R,R]^d$ contains $\Theta$. Fix a tolerance
$0<s\leq R$. We use the volumetric cutting-plane method analyzed in Appendix~A of
\citet{jiang2020improved}. Their
Theorem~4.1 shows that, for any convex target set contained in
$Q_0=[-R,R]^d$, the method uses
\[
    O\left(d\log\frac{dR}{s}\right)
\]
separation-oracle queries to either find a point in the target set or
certify that it contains no Euclidean ball of radius $s$. We compute the
leverage scores exactly, so the cutting-plane procedure itself is
deterministic; we suppress the polynomial arithmetic cost of computing the leverage scores and other overhead. In our application, \cref{lem:nonspanning-ball} allows us to set
\[
    s
    =
    \frac{r\eps}{3(\eps+GR)}.
\]
Therefore,
\[
    \frac{dR}{s}
    =
    \frac{3dR}{r}
    \left(1+\frac{GR}{\eps}\right),
\]
and the cutting-plane query bound becomes
\[
    T
    =
    O\left(
        d\log\left[
            \frac{3dR}{r}
            \left(1+\frac{GR}{\eps}\right)
        \right]
    \right).
\]

\section{Omitted Concentration Proof}
\label{app:adaptive-concentration}

We now give the details of the concentration argument behind
\cref{lem:adaptive-gradient-concentration}. Although the model deployed at a
given round may depend on the entire preceding history, conditional on that
history the deployed model is fixed and its batch consists of fresh,
independent samples. We can therefore apply a vector-valued concentration
inequality conditionally at each round and then union bound over the
deterministic round cap.

\begin{proof}[Proof of \cref{lem:adaptive-gradient-concentration}]
Index the cutting-plane rounds by $t\leq T$, and let $\mathcal G_t$ denote the
$\sigma$-algebra generated by the entire history before sampling the batch at
round $t$, including the current query. Let $A_t$ denote the event that round
$t$ triggers a deployment. Then $A_t$ is $\mathcal G_t$-measurable, and on
$A_t$ the deployed model $\theta_t$ is also $\mathcal G_t$-measurable. Conditional on $\mathcal G_t$ and on $A_t$, define
\[
    Y_{t,j}
    \coloneqq
    \nabla\ell(z_{t,j};\theta_t),
    \qquad
    g_t
    \coloneqq
    \E[Y_{t,j}\mid\mathcal G_t]
    =
    \E_{z\sim\cD(\theta_t)}
    [\nabla\ell(z;\theta_t)].
\]
Then the vectors
\[
    X_{t,j}
    \coloneqq
    Y_{t,j}-g_t
\]
are conditionally independent and mean zero. Moreover,
\[
    \norm{X_{t,j}}_2
    \leq
    \norm{Y_{t,j}}_2+\norm{g_t}_2
    \leq
    2G
\]
by \eqref{eq:sample-gradient-bound}. A standard Hoeffding inequality
therefore gives
\[
    \Pr\left(
        \norm{
            \frac1n
            \sum_{j=1}^n X_{t,j}
        }_2
        >\xi
        \,\middle|\,
        \mathcal G_t
    \right)
    \leq
    2\exp\left(
        -\frac{n\xi^2}{8G^2}
    \right)
    \qquad
    \text{on }A_t.
\]
Hence, if $n
    \geq
    \frac{8G^2}{\xi^2}
    \log\frac{2T}{\delta},
$
then
\[
    \Pr\left(
        \norm{
            \widetilde\nabla_t
            -
            \E_{z\sim\cD(\theta_t)}
            [\nabla\ell(z;\theta_t)]
        }_2
        >\xi
        \,\middle|\,
        \mathcal G_t
    \right)
    \leq
    \frac{\delta}{T}
\]
on $A_t$. Let $E_t$ be the event that round $t$ triggers a deployment and the above
gradient error exceeds $\xi$. Since $A_t\in\mathcal G_t$, the tower property
gives $\Pr(E_t)
    \leq
    \frac{\delta}{T}.$
A union bound over $t\leq T$ therefore shows that, with probability at least
$1-\delta$, the desired bound holds simultaneously at every deployment.
\end{proof}
\end{document}